\newcommand{\X}{X}
\newcommand{\Y}{Y}
\newcommand{\Z}{Z}
\newcommand{\W}{W}
\newcommand{\Xs}{{\X_i}}
\newcommand{\Ys}{{\Y_i}}
\newcommand{\s}{{s}} %
\theoremstyle{plain}
\newtheorem{theorem}{Theorem}%
\newtheorem{proposition}[theorem]{Proposition}
\newtheorem{lemma}[theorem]{Lemma}
\newtheorem{corollary}[theorem]{Corollary}
\newtheorem{definition}[theorem]{Definition}
\newtheorem{assumption}[theorem]{Assumption}
\theoremstyle{definition}
\newtheorem{remark}[theorem]{Remark}
\newcommand{\eg}{\emph{e.g.}}
\newcommand{\cC}{\mathcal{C}}
\newcommand{\cD}{\mathcal{D}}
\newcommand{\cH}{\mathcal{H}}
\newcommand{\cI}{\mathcal{I}}
\newcommand{\cP}{\mathcal{P}}
\newcommand{\cX}{\mathcal{X}}
\newcommand{\cY}{\mathcal{Y}}
\newcommand{\bE}{\mathbb{E}}
\newcommand{\bM}{\mathbb{M}}
\newcommand{\bP}{\mathbb{P}}
\newcommand{\bR}{\mathbb{R}}
\par\vspace{4mm}}
\newcommand{\grad}{{\nabla}}
\newcommand{\BlackBox}{\rule{1.5ex}{1.5ex}}  %
\def\QED{~\rule[-1pt]{5pt}{5pt}\par\medskip}
\newtheorem{theorem}{Theorem}[section]
\newtheorem{remark}{Remark}
\newtheorem{assumption}{Assumption}
\date{}
\title{\Large{Predictive Inference With Fast Feature Conformal Prediction}}
\author{
  Zihao Tang\\
  \small{School of Statistics and Management} \\ 
  \small{Shanghai University of Finance and Economics} \\
\small{\texttt{tangzihao@stu.sufe.edu.cn}} 
  \and
  Boyuan Wang\\
  \small{School of Economics and Management} \\ 
  \small{Tiangong University}\\
\small{\texttt{2110610065@tiangong.edu.cn}}
  \and
  Chuan Wen\\
 \small{Institute for Interdisciplinary Information Sciences} \\ 
 \small{Tsinghua University} \\
\small{\texttt{cwen20@mails.tsinghua.edu.cn}}
  \and
  Jiaye Teng\thanks{Corresponding author.}\\
  \small{School of Statistics and Management} \\ 
  \small{Shanghai University of Finance and Economics} \\
  \small{\texttt{tengjiaye@sufe.edu.cn}}
}
\begin{document}

\maketitle

\begin{abstract}

Conformal prediction is widely adopted in uncertainty quantification, due to its post-hoc, distribution-free, and model-agnostic properties.
In the realm of modern deep learning, researchers have proposed Feature Conformal Prediction (FCP), which deploys conformal prediction in a feature space, yielding reduced band lengths.
However, the practical utility of FCP is limited due to the time-consuming non-linear operations required to transform confidence bands from feature space to output space.
In this paper, we introduce Fast Feature Conformal Prediction (FFCP), which features a novel non-conformity score and is convenient for practical applications.
FFCP serves as a fast version of FCP, in that it equivalently employs a Taylor expansion to approximate the aforementioned non-linear operations in FCP.
Empirical validations showcase that FFCP performs comparably with FCP (both outperforming the vanilla version) while achieving a significant reduction in computational time by approximately 50x. The code is available at \url{https://github.com/ElvisWang1111/FastFeatureCP}

\end{abstract}

\section{Introduction}
\label{sec: intro}
Machine learning has been successfully applied into various fields~\citep{jordan2015machine,silver2017mastering}. 
However, machine learning models often face overconfidence issues~\citep{wei2022mitigating} and even hallucinations in large language models (LLMs)~\citep{ji2023survey}, which makes them unreliable and cannot be deployed in fields like finance and medicines~\citep{gelijns2001uncertainty,thirumurthy2019uncertain,morduch2017financial}. 
Therefore, it is essential to develop techniques for uncertainty quantification and calibrate the original model~\cite{abdar2021review,Guo2017OnCO,chen2021neural, DBLP:journals/corr/abs-2107-03342}
Among the uncertainty quantification techniques, Conformal Prediction (Vanilla CP or split conformal prediction, \cite{vovk2005algorithmic}; \cite{shafer2008tutorial}; \cite{burnaev2014efficiency}) stands out, because it is distribution-free, does not require retraining, and can be directly applied to various models. 
Conformal prediction uses a calibration step to calibrate a base model and then construct the confidence band. 
The goal of conformal prediction is to return a band $\cC_{1-\alpha}$ such that 
\begin{equation}
    \bP(\Y^\prime \in \cC_{1-\alpha}(\X^\prime)) \geq 1-\alpha.
\end{equation}
where $(X^\prime, Y^\prime)$ is a test point and $1-\alpha$ represents the confidence level.

In deep learning regimes, researchers try to utilize feature information in Vanilla CP, since the feature space usually contains meaningful semantic information in neural networks~\citep{shen2014learning}. 
This leads to Feature Conformal Prediction (FCP, \cite{teng2022predictive}).
Fortunately, one may get different band lengths on different individuals by using the feature information, leading to a shorter confidence band. 
For comparison, on the regression task, Vanilla CP only returns the same band length for all individuals, which indicates a longer length.
We refer to \citet{teng2022predictive} for more discussion.

\begin{figure}[t]
    \centering
    \subfigure{\includegraphics[height=0.6\linewidth]{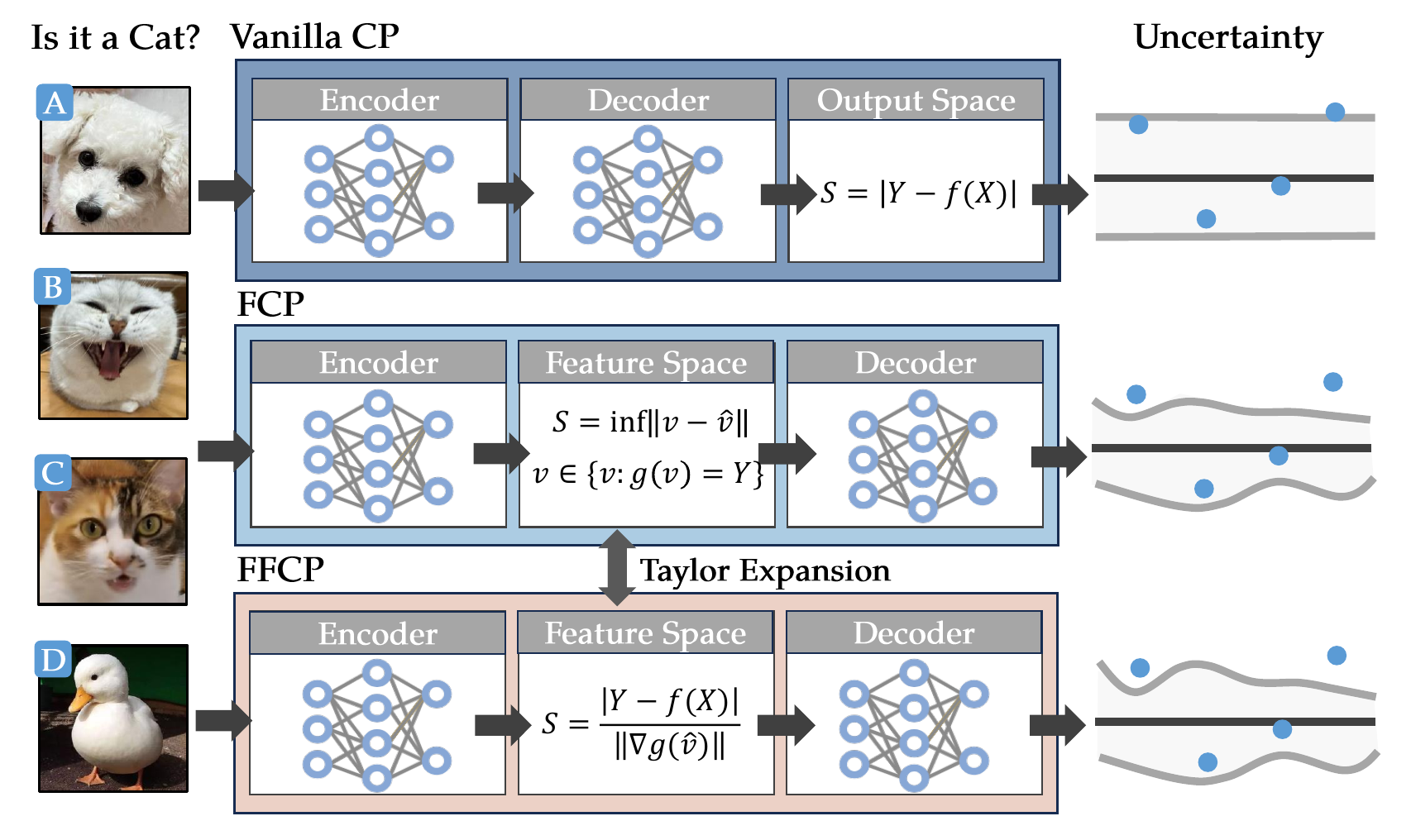}}
    \vspace{-0.2cm}
    \caption{Comparison among Vanilla CP, FCP, and FFCP. 
    FCP and FFCP are more efficient compared to Vanilla CP since they return different band lengths for different individuals. 
    This is done by calculating a non-conformity score in the feature space.
    Besides, FFCP approximates FCP using a Taylor expansion, which leads to a different non-conformity score and accelerates the transformation from feature space to output space. }
    \label{fig:inuitive-illustration}
\end{figure}

However, the practical applications of FCP are limited because (a) it is time-consuming, and (b) it only returns \emph{estimated} bands on the output space, making it less efficient. 
These two issues come from the step \emph{Band Estimation}, which transfers the confidence band from feature space to output space. 
This step involves complex non-linear operations called LiPRA~\citep{xu2020automatic} and therefore (a) the non-linear operation requires high computational costs, and (b) the configurations in LiPRA might finally influence the estimated band, further harming the performance of FCP.

In this paper, we present Fast Feature Conformal Prediction (FFCP), which offers a fast version for handling the aforementioned nonlinear operations. 
Different from Vanilla CP and FCP, FFCP introduces a novel non-conformity score $\s_{\text{ff}}(\cdot)$ that is simple to compute and does not require additional training,
\begin{equation}
    \s_{\text{ff}}(\X, \Y, {g} \circ {h}) = |\Y -  {g} \circ {h}(\X)|/\| \grad {g}(\hat{v})\|,
\end{equation}
where $(X, Y)$ denotes a sample, $g \circ h$ denotes a neural network where $h$ denotes the feature layers and $g$ denotes the prediction head, and the gradient $\grad {g}(\hat{v})$ denotes the gradient of ${g}(\cdot)$ on the feature $\hat{v}_i \triangleq h(\X)$, namely, $\grad {g}(\hat{v}_i) = \frac{\mathrm{d} {g}\circ {h}(\X_i)}{\mathrm{d} {h}(\X_i)} $.
We refer to Algorithm~\ref{alg: ffcp} for more details and illustrate the algorithm in Figure~\ref{fig:inuitive-illustration}.

The above non-conformity score is closely related to FCP.
Specifically, \textbf{FFCP using this non-conformity score can be regarded as a fast version of FCP, since it equivalently approximates the prediction head using a Taylor expansion}, which simplifies the aforementioned nonlinear operations.
Fortunately, FFCP inherits the merits of FCP, for example, it also utilizes the semantic information in the feature. 
We refer to Section~\ref{subsec:more-discussion} for more discussion.

In the theoretical perspective, we first demonstrate that FFCP is effective in Theorem~\ref{thm: ffcp effective}, in that it returned a confidence band with empirical coverage larger than the given confidence $1-\alpha$. 
Additionally, we demonstrate in Theorem~\ref{thm:FFCP-efficient-abs} that under square conditions, FFCP produces a shorter confidence band than Vanilla CP.
The square conditions outline the properties of the feature space from two perspectives: expansion and quantile stability, implying that the feature space has a smaller distance between individual non-conformity scores and their quantiles. 
This reduces the cost of the quantile operation and therefore leads to a shorter confidence band. 
We validate the square conditions using empirical observations. 

Empirically, we conduct several experiments on real-world datasets and show that FFCP performs comparably with FCP, both outperforming Vanilla CP, \textbf{while achieving nearly 50 times the speed of FCP in terms of runtime} for regression tasks.
We further validate the approximation ability of FFCP with FCP using the correlation between the non-conformity score of FFCP and FCP. 
We also apply FFCP to the image segmentation problems to verify its general applications. 
Besides, we show that the concept in FFCP is pretty general, and can be combined with other variants of CP, \eg, CQR~\citep{romano2019conformalized} and LCP~\citep{guan2023localized} in regression tasks, and RAPS~\citep{angelopoulos2020uncertainty} in classification tasks.

Overall, our main contributions are summarized as follows:
\begin{itemize}
\item We propose FFCP, which serves as a fast version of FCP. 
FFCP approximately achieves a 50x reduction in running time compared to FCP (Table~\ref{tab: FFCP time}) by using a Taylor expansion to approximate the prediction head in FCP.
Besides, FFCP inherits the merits of FCP and efficiently exploits semantic information in the feature space like FCP. 

\item Theoretical insights demonstrate that FFCP returns shorter band length compared to Vanilla CP (Theorem~\ref{thm:FFCP-efficient-abs}) while ensuring coverage exceeds the given confidence level under some mild conditions (Theorem~\ref{thm: ffcp effective}).

\item Extensive experiments with both synthetic and real data demonstrate the effectiveness of the proposed FFCP algorithm (Table~\ref{tab1:1dim-results}). Additionally, we demonstrate the universal applicability of our gradient-level techniques by extending them to other tasks such as classification (FFRAPS, Algorithm~\ref{alg:ffraps}) and segmentation, and to various conformal prediction variants, including CQR (FFCQR, Algorithm~\ref{alg: ffcqr}) and LCP (FFLCP, Algorithm~\ref{alg: fflcp}).

\end{itemize}

\section{Related Work}
Conformal prediction is a post-hoc calibration method dealing with uncertainty quantification~\citep{vovk2005algorithmic,DBLP:journals/jmlr/ShaferV08,Barber2020TheLO}, which is deployed in numerous fields~\citep{ye2024benchmarking,kumar2023conformal,quach2023conformal}.
The variants of conformal prediction typically revolve around the concept of non-conformity scores, with four main branches of development.

\paragraph{Relaxing Exchangeability.}
The first branch relaxes the exchangeability requirement~\citep{DBLP:conf/nips/TibshiraniBCR19, Hu2020ADT, Podkopaev2021DistributionfreeUQ, barber2022conformal},  leveraging weighted or reweighted quantiles to relax exchangeability.
By doing so, it gains more flexibility and broader applicability in handling data that may not satisfy the standard exchangeability assumptions.

\paragraph{Diverse Structures.}
The second branch applies conformal prediction to various data structures, for example, classification tasks~\citep{Romano2020ClassificationWV,angelopoulos2020uncertainty}, time series data~\citep{Xu2021ConformalPI,Gibbs2021AdaptiveCI}, censored data in survival analysis~\citep{DBLP:conf/icml/TengTY21,candes2023conformalized}, high-dimensional data~\citep{candes2021conformalized,Lei2013ACP}, Bellman-based data~\citep{yang2024bellman}, 
counterfactuals and individual treatment effects\citep{Lei2021ConformalIO}, 
and so on.

Another way involves model structures, such as $k$-NN regression~\citep{DBLP:journals/jair/PapadopoulosVG11}, quantiles incorporated~\citep{DBLP:conf/nips/RomanoPC19,sesia2020comparison}, density estimators~\citep{Izbicki2020DistributionfreeCP}, and conditional histograms~\citep{DBLP:conf/nips/SesiaR21}.
These methods further enrich the application scenarios of conformal prediction by adapting it to diverse model frameworks.

\paragraph{Enhancing Methods.}
The third branch focuses on enhancing the original conformal prediction with band length. \citet{izbicki2020cd} introduce CD-split and HPD-split methods, and \citet{yang2021finite} develop selection methods to minimize band length.
Of particular note is feature conformal prediction~\citep{teng2022predictive}, which leverages neural network training information via feature spaces to improve band length.

\paragraph{Localized Conformal Prediction.}  
The fourth branch focuses on enhancing the non-conformity score normalization by incorporating difficulty-related terms like $\frac{\|Y - \hat{Y}\|}{\sigma(X^\prime)}$ where $Y$ denotes the true label, $\hat{Y}$ denotes the predicted label, and 
$\sigma(X')$ denotes the standard deviation related to $X^\prime$. Here are three key approaches:
\begin{enumerate}
    \item \textbf{Weight Adjustment via Calibration Distances.}  
    This approach mainly calculates the distance from the testing point  to the calibration points and then uses these distances to define the weights of non-conformity scores in the calibration process~\citep{han2022split,guan2023localized}. Our gradient-level techniques can be used to combine with this branch (see FFLCP proposed later).
    
    \item \textbf{Normalization Using Proximity to Training Set.}  
    This approach utilizes the observation that a testing point exhibits smaller uncertainty when it is close to the training set, and uses such metrics to approximate $\sigma(X')$~\citep{papadopoulos2008normalized,papadopoulos2011regression,papadopoulos2011reliable}.
    In the deep learning regimes, we believe that such procedures can be further improved by calculating the distance in the feature space rather than the input space, since feature layers usually contain more semantic information.
    
    \item \textbf{Modeling $\sigma(X')$.}  
    A model is trained to estimate $\sigma(X')$, offering computational efficiency and reducing the need for additional training data~\citep{seedat2023improving,seedat2024triage}.
\end{enumerate}

\textbf{Uncertainty Quantification.} Uncertainty quantification is one of the most fundamental questions in machine learning.
In addition to conformal prediction (CP) methods, many other approaches exist for quantifying uncertainty. 
Common techniques include calibration-based techniques~\citep{Guo2017OnCO,DBLP:conf/icml/KuleshovFE18,DBLP:conf/cvpr/NixonDZJT19,abdar2021review,chang2024survey} and Bayesian-based techniques~\citep{DBLP:conf/icml/BlundellCKW15,DBLP:conf/icml/Hernandez-Lobato15b,DBLP:conf/icml/LiG17,izmailov2021bayesian,jospin2022hands}. 

\section{Preliminaries}
\label{sec: prelim}
We begin by introducing a dataset $\mathcal{D} = \{(X_i, Y_i)\}_{i \in [n]}$ indexed by $\cI$.
We split the dataset into two folds: a training fold $\mathcal{D}_{\text{tra}}$ indexed by $\mathcal{I}_{\text{tra}}$, and a calibration fold $\mathcal{D}_{\text{cal}}$ indexed by $\mathcal{I}_{\text{cal}}$.
Denote the testing point by $(\X^\prime, \Y^\prime)$.
For the model part, define $f$ as a neural network.
We partition $f = g \circ h$, where $h$ denotes the feature function (the initial layers of the neural network) and $g$ denotes the prediction head. 
For a sample $(\X, \Y)$, we define $\hat{v} = {h}(\X)$ as the trained feature.
We follow the ideas in \citet{teng2022predictive} and define the surrogate feature as any feature $v$ such that ${g}(v) = \Y$.
\begin{assumption}[exchangeability]
\label{assu: exchangeability}
Assume that the calibration data $(\Xs, \Ys) \in \mathcal{D}_{\text{cal}}$ and the testing point $(\X^\prime, \Y^\prime)$ are exchangeable. Formally, define $\Z_i, i=1, \dots, |\cI_{\text{cal}}| + 1$, as the above data pair. Then $\Z_i$ are exchangeable if arbitrary permutation follows the same distribution, i.e.,
\begin{equation}
(\Z_{1}, \dots, \Z_{|\cI_{\text{cal}}| + 1} ) \overset{d}{=} (\Z_{\pi(1)}, \dots, \Z_{\pi(|\cI_{\text{cal}}| + 1)}),
\end{equation}
with arbitrary permutation $\pi$ over $\{1, \cdots, |\cI_{\text{cal}}| + 1\}$.
\end{assumption}

Typically, Vanilla CP is composed of three key steps.\\
\textbf{I. Training Step.} 
We first train a base model using the training fold $\mathcal{D}_{\text{tra}}$.\\
\textbf{II. Calibration Step.} 
We calculate a non-conformity score $R_i=|\Y_i - f(\X_i)|$  using the calibration fold $\mathcal{D}_{\text{cal}}$.
The form of the score function might vary case by case, quantifying the divergence between ground truth and predicted values. \\
\textbf{III. Testing Step.} 
We construct the confidence band for the testing point $(X^\prime, Y^\prime)$ using the quantile of the non-conformity score $Q_{1-\alpha}$.

We present Vanilla CP in Algorithm~\ref{alg: cp}~\footnote{$\delta$ represents the Dirac function.}, and provide its theoretical guarantee in Theorem~\ref{thm:cp}.
 
\begin{algorithm*}[t]
\caption{Vanilla Conformal Prediction} 
\label{alg: cp} 
\begin{algorithmic}[1] 
\REQUIRE Confidence level $\alpha$, dataset $\cD = \{ (\Xs, \Ys)\}_{i \in \cI}$, tesing point $X^\prime$

\STATE{Randomly split the dataset $\mathcal{D}$ into a training fold $\cD_{\text{tra}} \triangleq \{(\X_i, \Y_i)\}_{i \in \mathcal{I}_{\text{tra}}}$ and a calibration fold $\cD_{\text{cal}} \triangleq \{(\X_i, \Y_i)\}_{i \in \mathcal{I}_{\text{cal}}}$; }

\STATE{Train a base model $f(\cdot) $ with training fold $\cD_{\text{tra}}$ ;}

\STATE{For each $i \in \mathcal{I}_{\text{cal}}$, calculate the non-conformity score $R_i = |\Y_i - f(\X_i)|$;}

\STATE{Calculate the $(1-\alpha)$-th quantile $Q_{1-\alpha}$ of the distribution  $\frac{1}{|\cI_{\text{cal}}| + 1} \sum_{i \in \mathcal{I}_{\text{cal}}  \delta_{R_i} + {\delta}_{\infty}}$.}

\ENSURE ${\cC^{\text{Vanillacp}}_{1-\alpha}(\X^\prime) = [f(\X^\prime) - Q_{1-\alpha}, f(\X^\prime) + Q_{1-\alpha}]}$.
\end{algorithmic} 
\end{algorithm*}

\begin{theorem}
\label{thm:cp}
Under Assumption~\ref{assu: exchangeability}, the confidence band $\cC_{1-\alpha}(\X^\prime)$ returned by Algorithm~\ref{alg: cp} satisfies
\begin{equation}
    \bP(\Y^\prime \in \cC_{1-\alpha}(\X^\prime)) \geq 1-\alpha.
\end{equation}
\end{theorem}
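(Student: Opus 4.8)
The plan is to reduce the coverage event to a statement about the ordering of non-conformity scores and then exploit exchangeability through a rank argument. First I would observe that the returned band $\mathcal{C}^{\text{Vanillacp}}_{1-\alpha}(X') = [f(X') - Q_{1-\alpha}, f(X') + Q_{1-\alpha}]$ contains $Y'$ if and only if $|Y' - f(X')| \leq Q_{1-\alpha}$. Writing $n = |\mathcal{I}_{\text{cal}}|$ and defining the test score $R_{n+1} := |Y' - f(X')|$ alongside the calibration scores $R_i = |Y_i - f(X_i)|$ for $i \in \mathcal{I}_{\text{cal}}$, the target probability is exactly $\mathbb{P}(R_{n+1} \leq Q_{1-\alpha}) \geq 1-\alpha$.

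The key structural fact is that $R_1, \dots, R_n, R_{n+1}$ are exchangeable. Since the base model $f$ is trained only on the training fold $\mathcal{D}_{\text{tra}}$, it is a fixed measurable map as far as the calibration fold and the test point are concerned; thus each score is obtained by applying the common deterministic function $z = (x,y) \mapsto |y - f(x)|$ to the variables $Z_i$, which are exchangeable by Assumption~\ref{assu: exchangeability}. Because exchangeability is preserved under a common transformation applied coordinatewise, the scores inherit it.

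It then remains to establish the quantile inequality for exchangeable scores: with $Q_{1-\alpha}$ the $(1-\alpha)$-quantile of $\frac{1}{n+1}\bigl(\sum_{i \in \mathcal{I}_{\text{cal}}} \delta_{R_i} + \delta_\infty\bigr)$, one has $\mathbb{P}(R_{n+1} \leq Q_{1-\alpha}) \geq 1-\alpha$. I would argue via ranks: $Q_{1-\alpha}$ is precisely the $\lceil (1-\alpha)(n+1)\rceil$-th smallest value among $\{R_1, \dots, R_n, \infty\}$, so the event $\{R_{n+1} \leq Q_{1-\alpha}\}$ is implied by the event that the rank of $R_{n+1}$ among all $n+1$ scores is at most $\lceil (1-\alpha)(n+1)\rceil$. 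By exchangeability the rank of $R_{n+1}$ is (sub-)uniform on $\{1, \dots, n+1\}$, whence this probability is at least $\lceil (1-\alpha)(n+1)\rceil / (n+1) \geq 1-\alpha$.

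The main obstacle is the careful bookkeeping around the quantile definition rather than any deep idea. Two points need attention. The first is the role of the atom $\delta_\infty$, which guarantees a well-defined (possibly infinite) quantile even when $\lceil (1-\alpha)(n+1)\rceil > n$, keeping the band valid in finite samples and letting it degenerate to $\mathbb{R}$ exactly when the calibration set is too small to certify coverage. The second is the treatment of ties among scores, where the clean uniform-rank statement must be replaced by a stochastic-dominance bound so that the inequality points in the correct direction. Handling these edge cases rigorously — not the underlying rank symmetry — is the only delicate part of the argument.
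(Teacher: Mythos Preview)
Your proposal is correct and follows the standard argument. The paper does not supply a separate proof of Theorem~\ref{thm:cp}, treating it as a known result; the closest analogue is the proof of Theorem~\ref{thm: ffcp effective} in Appendix~\ref{appendix: ffcp_effective}, which establishes exchangeability of the scores (via a CDF computation conditional on $\cD_{\text{tra}}$) and then invokes Lemma~1 of \citet{tibshirani2019conformal} for the quantile step --- precisely the rank/quantile inequality you spell out directly.
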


\section{Methodology}
\label{sec: ffcp}
In this section, we first illustrate the motivation behind FFCP in Section~\ref{subsec: Relationship between FFCP and FCP}.
Specifically, we address the complexity of non-linear operators in FCP and provide how we derive FFCP from FCP.
We then formally present the specific form of FFCP, including the non-conformity score, the returned bands, and the corresponding pseudocode.
We finally provide theoretical analyses on the coverage and band length in Section~\ref{subsec:Theoretical Guarantee for FFCP}.

\subsection{Relationship between FFCP and FCP}
\label{subsec: Relationship between FFCP and FCP}

In this section, we discuss the motivation behind FFCP. 
FFCP is inspired by FCP~\citep{teng2022predictive}, which conducts conformal prediction in the feature space. 
However, since the band is constructed in the feature space, FCP requires a \emph{Band Estimation} process to go from feature space to output space. 
Specifically, FCP applies \emph{LiPRA}~\citep{xu2020automatic} which derives the band in the output space $\{g(v): \| v - \hat{v} \| \leq Q_{1-\alpha}\}$. 
Unfortunately, the exact band is difficult to represent explicitly since the prediction head $g$ is usually highly non-linear, thereby resulting in significant computational complexity in terms of time.
Therefore, we propose approximating $g$ using a first-order Taylor expansion to simplify the aforementioned non-linear operator.

The core steps of FCP include: (a) calculating the non-conformity score (from output space to feature space),  followed by (b) deriving the confidence band (from feature space to output space).
We next introduce the concrete formulation of how FFCP approximates FCP. 

\textbf{From output space to feature space. }
FCP uses the non-conformity score $s_{\text{f}}(\cdot)$ in the feature space
\begin{equation}
\label{eqn: score}
    \s_{\text{f}}(\X, \Y, g \circ h) = \inf_{v \in \{v: g(v) = \Y\}}\|v -  \hat{v}\|.
\end{equation}
By using the Taylor expansion, one approximates $g$ with $g(v) \approx g(\hat{v}) + \grad g(\hat{v})(v-\hat{v})$.
Plugging into the approximation of $g$ leads to a new non-conformity score $s_{\text{ff}}(\cdot)$ 
\begin{equation}
    \s_{\text{ff}}(\X, \Y, g \circ h) = |\Y -  f(\X)|/\| \grad g(\hat{v})\|,
\end{equation}
where $\grad g(\hat{v})$ denotes the gradient of $g(\hat{v})$ on the feature $\hat{v}$, namely $\grad g(\hat{v}) = \frac{\mathrm{d} g\circ h(\X)}{\mathrm{d} h(\X)} $.

\textbf{From feature space to output space. }
After constructing the confidence band in the feature space, FCP maps this band to the output space. 
Specifically, they derive the following band in the output space which is called \emph{Band Estimation}:
\begin{equation}
    \{g(v): \| v - \hat{v} \| \leq Q_{1-\alpha}\}.
\end{equation}
They propose to use LiPRA in this process, which is time-consuming. By plugging into the Taylor approximation of $g$, one can construct the band $\cC^{\text{ffcp}}_{1-\alpha}$ as
\begin{equation}
\cC^{\text{ffcp}}_{1-\alpha}(\X)=\left[g(\hat{v}) - \|\nabla g(\hat{v})\| Q_{1-\alpha},  g(\hat{v}) + \|\nabla g(\hat{v})\| Q_{1-\alpha}\right].
\end{equation}

\begin{remark}[High-dimensional Response]
    When the response $\Y= [\Y^1, \Y^2, \ldots, \Y^m]$ is high-dimensional, one can deploy conformal prediction in a coordinate-wise level. In this scenario, the confidence band for a specific dimension $j \in [m]$ of $\Y$ becomes
    \begin{equation}
        \s^j_{\text{ff}}(\X, \Y, g \circ h) = |\Y^j -  f(\X)^j|/\| \grad g(\hat{v})^j\|,
    \end{equation}
    where $\nabla g(\hat{v})^j = \left(\frac{\partial f(\X)}{\partial h(\X)}\right)_j$ represents the $j$-th row of the Jacobian matrix of $f$ with respect to $h$ at $\X$.
    And the returned band for the $j$-th coordinate is derived as follows:
    \begin{equation}
    \cC^{\text{ffcp}}_{1-\alpha}(\X)_j = \left[g(\hat{v})^j - \|\nabla g(\hat{v})^j\| Q_{1-\alpha}, g(\hat{v})^j + \|\nabla g(\hat{v})^j\| Q_{1-\alpha}\right].
    \end{equation}
\end{remark}

Based on the above discussion, we present the full algorithm in Algorithm~\ref{alg: ffcp}.
Notably, the Taylor expansion in FFCP is usually different for each sample $\X, \Y$, which further leads to confidence bands that are individually different. 
Besides, FFCP inherits the advantages of FCP.
For example, this framework is pretty general and can be combined with other variants of Vanilla CP, \eg, CQR~\citep{romano2019conformalized}.

\begin{algorithm*}[t]
\caption{Fast Feature Conformal Prediction 
} 
\label{alg: ffcp} 
\begin{algorithmic}[1] 
\REQUIRE  Confidence level $\alpha$, dataset $\cD = \{ (\Xs, \Ys)\}_{i \in \cI}$, tesing point $X^\prime$
\STATE{Randomly split the dataset $\mathcal{D}$ into a training fold $\cD_{\text{tra}} \triangleq \{(\X_i, \Y_i)\}_{i \in \mathcal{I}_{\text{tra}}}$ and a calibration fold $\cD_{\text{cal}} \triangleq \{(\X_i, \Y_i)\}_{i \in \mathcal{I}_{\text{cal}}}$ ;}

\STATE{Train a base neural network with training fold $f(\cdot) = g\circ h(\cdot)$ with training fold $\cD_{\text{tra}}$;}

\STATE{For each $i \in \mathcal{I}_{\text{cal}}$, calculate the non-conformity score 
$\tilde{R}_i = |\Y_i -  f(\X_i)|/\| \grad {g}(\hat{v}_i)\|,$ where $\grad {g}(\hat{v}_i)$ denotes the gradient of ${g}(\cdot)$ on the feature $\hat{v}_i \triangleq h(\X_i)$, namely $\grad {g}(\hat{v}_i) = \frac{\mathrm{d} {g}\circ {h}(\X_i)}{\mathrm{d} {h}(\X_i)} $;
}

\STATE{Calculate the $(1-\alpha)$-th quantile $Q_{1-\alpha}$ of the distribution  $\frac{1}{|\cI_{\text{cal}}| + 1} \sum_{i \in \mathcal{I}_{\text{cal}}}  \delta_{\tilde{R}_i} + {\delta}_{\infty}$;}

\ENSURE $\cC_{1-\alpha}^{\text{ffcp}}(\X^\prime)= \left[ f(\X^\prime) - \|\nabla{g}(\hat{v}^\prime)\| Q_{1-\alpha}, f(\X^\prime) + \|\nabla {g}(\hat{v}^\prime)\| Q_{1-\alpha}\right]$, where $\hat{v}^\prime = h(\X^\prime)$.

\end{algorithmic} 
\end{algorithm*}

\subsection{Theoretical Guarantee for FFCP}
\label{subsec:Theoretical Guarantee for FFCP}
This section outlines the theoretical guarantee for FFCP concerning coverage (effectiveness) and band length (efficiency). 
Below, we offer the main theorems and defer the full proofs to Appendix~\ref{appendix: ffcp_effective} and \ref{appendix: ffcp_efficient}.

We first demonstrate that the confidence band produced by Algorithm~\ref{alg: ffcp} is valid under Assumption~\ref{assu: exchangeability}.

\begin{theorem}[Coverage]
\label{thm: ffcp effective}
Under Assumption 1, for any $\alpha > 0$, the confidence band returned by Algorithm~\ref{alg: ffcp} satisfies:
\begin{equation}
    \mathbb{P}(Y' \in \cC^{\text{ffcp}}_{1-\alpha}(X')) \geq 1 - \alpha,
\end{equation}
where the probability is taken over the calibration fold and the testing point $(X', Y')$.
\end{theorem}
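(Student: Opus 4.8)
The plan is to recognize that Algorithm~\ref{alg: ffcp} is exactly split conformal prediction carried out with the redefined non-conformity score $s(\X,\Y) = |\Y - f(\X)|/\|\grad g(h(\X))\|$, and thereby to reduce the claim to the same quantile argument that underlies Theorem~\ref{thm:cp}. Concretely, I would first condition on the training fold $\cD_{\text{tra}}$, so that the trained network $f = g \circ h$---and hence the score map $s(\cdot,\cdot)$---is a fixed deterministic function. It then suffices to prove the coverage bound conditionally on $\cD_{\text{tra}}$ and integrate at the end, since a conditional probability exceeding $1-\alpha$ for every realization of $\cD_{\text{tra}}$ forces the marginal probability to do so as well.

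The next step is to transfer exchangeability from the raw data pairs to the scores. Under Assumption~\ref{assu: exchangeability}, the calibration pairs $\{(\Xs,\Ys)\}_{i\in\cI_{\text{cal}}}$ and the test pair $(\X',\Y')$ are exchangeable; applying the fixed measurable map $s$ coordinatewise preserves exchangeability, so the scores $\{\tilde{R}_i\}_{i\in\cI_{\text{cal}}}$ together with $\tilde{R}' \triangleq s(\X',\Y')$ are exchangeable. I would then invoke the standard quantile lemma---the very inequality used to prove Theorem~\ref{thm:cp}: for exchangeable scores, the test score lies at or below the $(1-\alpha)$-quantile $Q_{1-\alpha}$ of the inflated empirical distribution $\frac{1}{|\cI_{\text{cal}}|+1}\big(\sum_{i\in\cI_{\text{cal}}}\delta_{\tilde{R}_i} + \delta_\infty\big)$ with probability at least $1-\alpha$, i.e. $\bP(\tilde{R}' \le Q_{1-\alpha}\mid \cD_{\text{tra}}) \ge 1-\alpha$.

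It remains to convert this score inequality into the desired coverage event. Writing out the band from Algorithm~\ref{alg: ffcp}, the inclusion $\Y' \in \cC^{\text{ffcp}}_{1-\alpha}(\X')$ reads $|\Y' - f(\X')| \le \|\grad g(\hat{v}')\|\, Q_{1-\alpha}$, which upon dividing by $\|\grad g(\hat{v}')\|$ is precisely $\tilde{R}' \le Q_{1-\alpha}$. Hence the two events coincide, and combining with the previous step yields $\bP(\Y'\in\cC^{\text{ffcp}}_{1-\alpha}(\X')\mid\cD_{\text{tra}})\ge 1-\alpha$; integrating over $\cD_{\text{tra}}$ completes the argument.

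The one point demanding care---and the main obstacle---is the division by $\|\grad g(\hat{v}')\|$ in the equivalence step, since this denominator may vanish. I would handle this with the natural conventions $0/0 = 0$ and $c/0 = +\infty$ for $c>0$: when $\|\grad g(\hat{v}')\| = 0$ the band degenerates to the single point $\{f(\X')\}$, and one checks directly that $\Y' \in \{f(\X')\}$ holds if and only if $\tilde{R}' \le Q_{1-\alpha}$, so the event equivalence survives the degeneracy (the $\delta_\infty$ atom in the empirical distribution absorbs any infinite scores). Alternatively, under a mild non-degeneracy assumption on the feature map this event has probability zero and can be discarded. Apart from this bookkeeping, no step is technically hard: the result is essentially the split-CP coverage guarantee of Theorem~\ref{thm:cp} reread through the FFCP score.
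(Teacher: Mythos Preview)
Your proposal is correct and follows essentially the same route as the paper: condition on the training fold so that the score map is fixed, transfer exchangeability from the data pairs to the scores, and then invoke the standard split-conformal quantile lemma (the paper cites Lemma~1 of \citet{tibshirani2019conformal}). If anything you are more careful than the paper, which neither spells out the event equivalence $\{\tilde{R}' \le Q_{1-\alpha}\} = \{Y' \in \cC^{\text{ffcp}}_{1-\alpha}(X')\}$ nor addresses the zero-gradient degeneracy.
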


Next, we show that FFCP is provably more efficient than the Vanilla CP.
To simplify the discussion, we present an informal version of Theorem~\ref{thm:FFCP-efficient-abs} here and postpone the formal version to Appendix~\ref{appendix: ffcp_efficient}.

\begin{theorem}[Band Length]
\label{thm:FFCP-efficient-abs}
Under mild assumptions, if the
following square conditions hold:

\vspace{-0.05cm}
\begin{enumerate}
    \item \textbf{Expansion.} The feature space expands the differences between individual length and their quantiles.
    
    \item \textbf{Quantile Stability.} Given a calibration set $\cD_{\text{cal}}$, the quantile of the band length is stable in both feature space and output space. 
    \vspace{-0.05cm}
\end{enumerate}
Then FFCP provably outperforms vanilla CP in terms of average band length.
\end{theorem}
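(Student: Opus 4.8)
The plan is to reduce the statement to a single scalar inequality between the two quantile levels and then show that the two square conditions are precisely what is needed to establish it. First I would record the two band lengths explicitly: by Theorem~\ref{thm:cp} and Algorithm~\ref{alg: cp}, Vanilla CP returns the constant half-width $Q^{\mathrm{v}}_{1-\alpha}$, the $(1-\alpha)$-quantile of the scores $R_i = |\Y_i - f(\Xs)|$, whereas Algorithm~\ref{alg: ffcp} returns the individual half-width $w_i\, Q^{\mathrm{ff}}_{1-\alpha}$, where $w_i \triangleq \|\grad g(\hat{v}_i)\|$ and $Q^{\mathrm{ff}}_{1-\alpha}$ is the $(1-\alpha)$-quantile of the FFCP scores $\tilde{R}_i = R_i / w_i$. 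Since the calibration points and the test point are exchangeable (Assumption~\ref{assu: exchangeability}), the expected test-time half-width equals the calibration-fold average $\bar{w}\,Q^{\mathrm{ff}}_{1-\alpha}$ with $\bar{w} = \frac{1}{|\cI_{\text{cal}}|}\sum_{i \in \cI_{\text{cal}}} w_i$, so the claim of Theorem~\ref{thm:FFCP-efficient-abs} reduces to the single inequality $\bar{w}\, Q^{\mathrm{ff}}_{1-\alpha} \le Q^{\mathrm{v}}_{1-\alpha}$.

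The key observation is the exact identity $R_i = w_i \tilde{R}_i$, which lets me decompose the per-point gap between the two half-widths as
\[
Q^{\mathrm{v}}_{1-\alpha} - w_i\, Q^{\mathrm{ff}}_{1-\alpha} = \bigl(Q^{\mathrm{v}}_{1-\alpha} - R_i\bigr) + w_i\bigl(\tilde{R}_i - Q^{\mathrm{ff}}_{1-\alpha}\bigr).
\]
Averaging over $i \in \cI_{\text{cal}}$, the target inequality becomes $\sum_i w_i(\tilde{R}_i - Q^{\mathrm{ff}}_{1-\alpha}) \ge \sum_i (R_i - Q^{\mathrm{v}}_{1-\alpha})$; that is, the feature-space deviations from the feature-space quantile, once transported to the output space through the gradient norm $w_i$, must dominate the raw output-space deviations from the output-space quantile. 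This is exactly the content I would assign to the \textbf{Expansion} condition. The \textbf{Quantile Stability} condition enters to make the two reference levels $Q^{\mathrm{v}}_{1-\alpha}$ and $Q^{\mathrm{ff}}_{1-\alpha}$ behave as fixed, mutually consistent thresholds: I would use it to guarantee that, up to the boundary point, the same calibration indices fall below their respective quantiles in both spaces, so that the signs of $\tilde{R}_i - Q^{\mathrm{ff}}_{1-\alpha}$ and $R_i - Q^{\mathrm{v}}_{1-\alpha}$ are aligned point by point.

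The main obstacle is precisely this sign structure: the deviations change sign across the quantile, so a naive pointwise bound of the form $w_i|\tilde{R}_i - Q^{\mathrm{ff}}_{1-\alpha}| \ge |R_i - Q^{\mathrm{v}}_{1-\alpha}|$ would work in our favor for the outliers (above the quantile) but against us for the inliers (below it), and the two effects could cancel. The delicate step is therefore to formalize Expansion and Quantile Stability so that they jointly yield a \emph{one-directional} aggregate inequality rather than a sign-canceling pointwise one: I would split $\cI_{\text{cal}}$ into the above- and below-quantile index sets, use Quantile Stability to argue these sets coincide across the two spaces and to control the inlier contribution, and then apply Expansion to the outlier sum. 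I expect the remaining fiddly points to be the treatment of ties and the single boundary point defining the empirical quantile, together with the exchangeability argument transferring the calibration-fold average to the test expectation; these are routine once the aggregate inequality is in place.
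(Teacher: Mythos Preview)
Your reduction to the single inequality $\bar w\,Q^{\mathrm{ff}}_{1-\alpha}\le Q^{\mathrm v}_{1-\alpha}$ and the per-point decomposition $Q^{\mathrm v}_{1-\alpha}-w_iQ^{\mathrm{ff}}_{1-\alpha}=(Q^{\mathrm v}_{1-\alpha}-R_i)+w_i(\tilde R_i-Q^{\mathrm{ff}}_{1-\alpha})$ are exactly the starting identity the paper exploits (with $V^o_i=R_i$ and $V^o_i/\|\nabla g(\hat v_i)\|=\tilde R_i$). The divergence is in how you assign work to the two square conditions, and there the plan has a gap.

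First, \textbf{Quantile Stability} in the paper is not a sign-alignment device. In the formal statement (Theorem~\ref{thm: ffcp efficient}) it is a concentration bound of the form $\bE_{\cD\sim\cP^n}|Q_{1-\alpha}(V^{\bullet}_{\cD})-Q_{1-\alpha}(V^{\bullet}_{\cD_{\mathrm{cal}}})|\le c/\sqrt n$ in both spaces, used only at the very end to transfer a population-level inequality (expectation over random calibration sets) down to the fixed $\cD_{\mathrm{cal}}$. It says nothing about the above/below-quantile index sets in the two spaces coinciding; indeed, because $\tilde R_i=R_i/w_i$ with $w_i$ varying per $i$, the orderings of $\{R_i\}$ and $\{\tilde R_i\}$ generally differ, so those index sets do not match and your splitting argument would not go through. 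Relatedly, your exchangeability step only gives $\bE\|\nabla g(\hat v')\|$ equal to the population mean, not the empirical $\bar w$; closing that gap is precisely what the paper's version of Quantile Stability is for.

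Second, the paper avoids the sign-cancellation problem not by splitting but by passing through an absolute value via an extra H\"older hypothesis $|\cH(x;X)-\cH(y;X)|\le L|x-y|^\alpha$ on the feature-to-output map. Pointwise this yields $w_iQ^{\mathrm{ff}}_{1-\alpha}-R_i\le L\,|Q^{\mathrm{ff}}_{1-\alpha}-\tilde R_i|^\alpha$, hence after averaging $\bM[w_i]\,Q^{\mathrm{ff}}_{1-\alpha}\le \bM[R_i]+L\,\bM|Q^{\mathrm{ff}}_{1-\alpha}-\tilde R_i|^\alpha$. \textbf{Expansion} is then formalized as exactly the statement that this H\"older error is strictly smaller than $\bM[Q^{\mathrm v}_{1-\alpha}-R_i]$, with an additional slack $2\max\{L,1\}(c/\sqrt n)^{\min\{\alpha,1\}}$ left over to absorb the two Quantile-Stability errors. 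Chaining these gives $\bE_{\cD}\bM[\|\nabla g(\hat v)\|]\,Q^{\mathrm{ff}}_{1-\alpha}<\bE_{\cD}Q^{\mathrm v}_{1-\alpha}$ at the population level, and Quantile Stability then descends to $\cD_{\mathrm{cal}}$. So the missing ingredient in your plan is the H\"older bound (which is what replaces your above/below split), and the role you assigned to Quantile Stability should instead be this population-to-sample transfer.
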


Theorem~\ref{thm:FFCP-efficient-abs} implies that FFCP provably outperforms Vanilla CP in terms of average band length. 
The \emph{square conditions} imply that the feature space has a smaller distance between individual non-conformity scores and their quantiles. 
This reduction in the computational overhead of the quantile operation subsequently yields a shorter band length.
We provide empirical verifications on this assumption, see Figure~\ref{fig:VCP-FFCP-score-Histogram} for more details.

The intuition behind Theorem~\ref{thm:FFCP-efficient-abs} is as follows: Initially, FFCP and Vanilla CP perform quantile operations in different spaces, with the \emph{Expansion} condition ensuring that the quantile step in FFCP costs less. The ultimate \emph{Quantile Stability} condition confirms that the band can be generalized from the calibration folds to the test folds. 

\section{Experiments}
\label{sec: experiments}
This section presents the experiments to validate the utility of FFCP.
Firstly, we detail the experimental setup in Section~\ref{sec: exp setup}.
Secondly, we present that FFCP achieves both effectiveness and efficiency with faster execution in Section~\ref{subsec:FFCP-res}.
Thirdly, in Section~\ref{subsubsec:Other-task}, we verify that FFCP can be easily deployed and performs robustly across various tasks, including classification and segmentation.
Furthermore, in Section~\ref{subsubsec: Extensions-ffcp}, we show that the gradient-level techniques used in FFCP can be extended to classic CP models such as CQR~\citep{romano2019conformalized} and LCP~\citep{guan2023localized}. 
A more detailed account of this extension can be found in Section~\ref{subsec:Extensions}.
Finally, we conduct several more experiments in Section~\ref{subsec:more-discussion} to establish the close relationship between FFCP and FCP, to demonstrate the benefits of FFCP from the good representation of gradient, and to provide empirical validations for the theoretical insights.

\subsection{Experiments Setups}
\label{sec: exp setup}
\textbf{Datasets.} 
We consider both synthetic datasets and realistic datasets, including
\textbf{(a) synthetic dataset}: $\Y=\W\X+\epsilon$, where $\X \in [0,1]^{100}, \Y \in \mathbb{R}, \epsilon \sim \mathcal{N}(0,1)$, $\W$ is a fixed randomly matrix.
\textbf{(b)  real-world unidimensional target datasets}: 
ten datasets from UCI machine learning~\citep{Asuncion2007UCIML} and other sources: community and crimes (\emph{COM}), Facebook comment volume variants one and two (\emph{FB1} and \emph{FB2}), medical expenditure panel survey (\emph{MEPS19--21})~\citep{Cohen2009TheME}, Tennessee’s student teacher achievement ratio (\emph{STAR})~\citep{achilles2008tennessee},
physicochemical properties of protein tertiary structure (\emph{BIO}), blog feedback (\emph{BLOG})~\citep{buza2014feedback},
and bike sharing (\emph{BIKE}).
and 
\textbf{(c) real-world semantic segmentation dataset}: Cityscapes~\citep{Cordts2016Cityscapes}. \textbf{(d) real-world semantic classification dataset}: Imagenet-Val.

\textbf{Algorithms.} 
We compare three methods: Vanilla CP, FCP, and FFCP, with Vanilla CP serving as the baseline. For the one-dimensional scenario, we perform direct calculations. For higher-dimensional cases, we use a coordinate-wise level non-conformity score.

\textbf{Evaluation.}
The algorithmic empirical performance is evaluated with the following metrics:
\begin{itemize}
    \item \textbf{Runtime} For runtime evaluation of the algorithm, the timing starts at the score calculation and ends with the final prediction bands returned. All the tests are performed on a desktop with an Intel Core i9-12900H CPU, NVIDIA GeForce RTX 4090 GPU, and 32 GB memory.
    \item \textbf{Coverage (Effectiveness)} Coverage refers to the observed frequency with which a test point falls within the predicted confidence interval. Ideally, a predictive inference method should yield a coverage rate that is slightly higher than $1-\alpha$ for a given confidence level $\alpha$.
    \item \textbf{Band length (Efficiency)} When the coverage exceeds $1-\alpha$, our goal is to minimize the length of the confidence band. For FFCP, since we use a 5-layer neural network, each layer can be viewed as a feature layer. Therefore, in the experiments, we obtain the band length returned by each of the 5 layers of the neural network. In the subsequent results, if only a single band length is presented, it corresponds to the shortest band length returned by the different neural network layers. Otherwise, the results for all layers from layer 0 to layer 4 (with the last layer typically representing the Vanilla CP result) will be shown.
\end{itemize}
Let $\Y = (\Y^{1}, \ldots, \Y^{d}) \in \bR^d$ denote the $d$-dimensional response variable, and let $\cC(X) \subseteq \bR^d$ be the confidence band associated with predictor $X$. 
The length of this confidence band in each dimension is represented by the vector $|\cC(X)| = \left(|\cC(X)|^{1}, \ldots, |\cC(X)|^{d}\right) \in \bR^d$. 
Denote the indices of the test set by $\cI_{\text{tes}}$ and the set of dimensions by $[d] = \{1, \ldots, d\}$.

We then define the coverage and band length as: \begin{equation} \text{Coverage} = \frac{1}{|\cI_{\text{tes}}|} \sum_{i \in \cI_{\text{tes}}} \mathbb{I}\left(Y_i \in \cC(X_i)\right), \quad \text{Band Length} = \frac{1}{|\cI_{\text{tes}}|} \sum_{i \in \cI_{\text{tes}}} \left( \frac{1}{d} \sum_{j=1}^{d} |\cC(X_i)|^{j} \right), \end{equation} where $\mathbb{I}(\cdot)$ is the indicator function that equals 1 if its argument is true and 0 otherwise.

\subsection{Results on Coverage, Band Length and Runtime}
\label{subsec:FFCP-res}

\begin{table*}[t]
\caption{{Time comparison among Vanilla CP, FCP, and FFCP. FFCP ensures faster running speed compared to FCP. The last column represents the speed improvement factor of FFCP compared to FCP. The time unit is in seconds.}
}
\label{tab: FFCP time}
\begin{center}
\begin{small}
\begin{sc}
\begin{tabular}{ccccc}
\toprule
Dataset  &  \multicolumn{1}{c}{Vanilla CP}  &  \multicolumn{1}{c}{FCP}
&  \multicolumn{1}{c}{FFCP}
&  \multicolumn{1}{c}{FASTER}\\
\midrule
\midrule
synthetic  &  $0.0088$\scriptsize{$\pm 0.0003$} & ${3.8939}$\scriptsize{$\pm 0.3725$} &  $0.0902$\scriptsize{$\pm 0.0056$} & 
$43$x\\
com  &  $0.0047$\scriptsize{$\pm 0.0010$} & ${4.9804}$\scriptsize{$\pm 0.8588$}  & $0.0844$\scriptsize{$\pm 0.0187$} & 
$59$x\\
fb1 &  $0.0245$\scriptsize{$\pm 0.0059$}&  $5.9822$\scriptsize{$\pm 0.9871$}&   {$ 0.1940$\scriptsize{$\pm 0.0564$}} & 
$31$x\\
fb2 &  $0.0414$\scriptsize{$\pm 0.0070$}&  $9.3534$\scriptsize{$\pm 0.0927$}&    {$ 0.2510$\scriptsize{$\pm 0.0058$}} & 
$37$x\\
meps19  &  $0.0106$\scriptsize{$\pm 0.0010$}  &   {${3.3237}$\scriptsize{$\pm 0.0431$}} &  $0.0755$\scriptsize{$\pm 0.0037$} & 
$44$x\\
meps20  &  $0.0152$\scriptsize{$\pm 0.0016$}  &  $5.4003$\scriptsize{$\pm 0.3945$}   &   {$ 0.0948$\scriptsize{$\pm 0.0077$} } & 
$57$x\\
meps21 &  $0.0137$\scriptsize{$\pm 0.0008$} &  $4.1657$\scriptsize{$\pm 0.0670$}  &   {$ 0.0854$\scriptsize{$\pm 0.0146$} } & 
$49$x\\
star &  $0.0030$\scriptsize{$\pm 0.0006$} &   {$ {3.5842}$\scriptsize{$\pm 0.3722$}} &  $0.0332$\scriptsize{$\pm 0.0066$} & 
$108$x\\
bio &  $0.0291$\scriptsize{$\pm 0.0053$}&   {$ {7.5417}$\scriptsize{$\pm 1.1028$}} &  $0.2042$\scriptsize{$\pm 0.0344$} & 
$37$x\\
blog &  $0.0340$\scriptsize{$\pm 0.0024$}&   {$ {8.0913}$\scriptsize{$\pm 1.2072$}}&  $	0.2239$\scriptsize{$\pm 0.0261$} & 
$36$x\\
bike &  $0.0072$\scriptsize{$\pm 0.0007$}&  $ 3.5806$\scriptsize{$\pm 0.0285$}&   {$ 0.0534$\scriptsize{$\pm 0.0021$}} & 
$67$x\\
\bottomrule
\end{tabular}
\end{sc}
\end{small}
\end{center}
\end{table*}

\begin{table*}[t]
\caption {Comparison of coverage and band length among Vanilla CP, FCP, and FFCP. 
FFCP achieves significantly faster running speeds while performing comparably to FCP in most datasets and outperforming Vanilla CP.
For FFCP, we select the shortest band length among all layers.}
\label{tab1:1dim-results}
\begin{center}
\begin{small}
\begin{sc}
\resizebox{\linewidth}{!}{
\begin{tabular}{ccccccc}
\toprule
Method  &  \multicolumn{2}{c}{Vanilla CP}  &  \multicolumn{2}{c}{FCP} &  \multicolumn{2}{c}{FFCP} \\
\cmidrule(lr){2-3} \cmidrule(lr){4-5} \cmidrule(lr){6-7}
Dataset &  Coverage  &  Length  
&  Coverage  &  Length &  Coverage  &  Length \\
\midrule
\midrule
synthetic  
& $90.080$\scriptsize{$\pm 0.951$} & $0.176$\scriptsize{$\pm 0.015$} 
& $89.930$\scriptsize{$\pm 0.956$} & $\textbf{0.081}$\scriptsize{$\pm 0.041$} 
& $90.080$\scriptsize{$\pm 0.951$} & $0.176$\scriptsize{$\pm 0.015$} \\
com  
& $89.875$\scriptsize{$\pm 0.985$} & $1.974$\scriptsize{$\pm 0.071$} 
& $89.724$\scriptsize{$\pm 1.087$} & $1.939$\scriptsize{$\pm 1.408$} 
& $90.226$\scriptsize{$\pm 2.179$} & $\textbf{1.838}$\scriptsize{$\pm 0.180$}\\
fb1 
&  $90.254$\scriptsize{$\pm 0.170$} &  $2.004$\scriptsize{$\pm 0.191$}
&  $90.198$\scriptsize{$\pm 0.207$} &  $2.010$\scriptsize{$\pm 0.182$}
& $90.168$\scriptsize{$\pm 0.220$} & $\textbf{1.472}$\scriptsize{$\pm 0.232$}\\
fb2 
& $89.933$\scriptsize{$\pm 0.206$} & $2.016$\scriptsize{$\pm 0.218$}
& $89.966$\scriptsize{$\pm 0.130$} & $ \textbf{1.371}$\scriptsize{$\pm 0.370$}
& $89.868$\scriptsize{$\pm 0.062$} & $1.425$\scriptsize{$\pm 0.109$}\\
meps19  
& $90.567$\scriptsize{$\pm 0.311$} & $3.982$\scriptsize{$\pm 0.614$}
& $90.605$\scriptsize{$\pm 0.340$} & $3.493$\scriptsize{$\pm 2.734$} 
& $90.352$\scriptsize{$\pm 0.469$} & $\textbf{3.134}$\scriptsize{$\pm 0.309$} \\
meps20  
& $89.923$\scriptsize{$\pm 0.715$} & $4.184$\scriptsize{$\pm 0.316$}  
& $89.929$\scriptsize{$\pm 0.770$} & $\textbf{2.730}$\scriptsize{$\pm 0.962$} 
& $89.615$\scriptsize{$\pm 0.661$} & $3.268$\scriptsize{$\pm 0.283$}\\
meps21 
& $90.019$\scriptsize{$\pm 0.341$} & $3.732$\scriptsize{$\pm 0.555$} 
& $90.038$\scriptsize{$\pm 0.303$} & $3.393$\scriptsize{$\pm 1.313$} 
& $89.745$\scriptsize{$\pm 0.344$} & $\textbf{3.146}$\scriptsize{$\pm 0.506$}\\
star 
& $90.393$\scriptsize{$\pm 1.494$} & $0.208$\scriptsize{$\pm 0.004$} 
& $90.300$\scriptsize{$\pm 1.362$} & $\textbf{0.174}$\scriptsize{$\pm 0.038$}
& $90.393$\scriptsize{$\pm 1.494$} & $ 0.208$\scriptsize{$\pm 0.004$} \\
bio 
&  $89.875$\scriptsize{$\pm 0.488$} & $1.661$\scriptsize{$\pm 0.019$}
&  $89.930$\scriptsize{$\pm 0.501$} & $\textbf{1.412}$\scriptsize{$\pm 0.265$}
&  $89.875$\scriptsize{$\pm 0.488$} & $1.661$\scriptsize{$\pm 0.019$}\\
blog 
& $90.176$\scriptsize{$\pm 0.241$} & $3.524$\scriptsize{$\pm 0.850$}
& $90.151$\scriptsize{$\pm 0.405$} & $2.795$\scriptsize{$\pm 1.385$}
& $90.059$\scriptsize{$\pm 0.101$} & $\textbf{2.741}$\scriptsize{$\pm 0.517$}\\
bike 
& $89.871$\scriptsize$\pm 0.568$ & $0.703$\scriptsize{$\pm 0.016$}
& $89.394$\scriptsize$\pm 0.633$ & $2.147$\scriptsize$\pm 0.249$
& $89.624$\scriptsize{$\pm 0.688$} & $\textbf{0.635}$\scriptsize{$\pm 0.030$} \\
\bottomrule
\end{tabular}
}
\end{sc}
\end{small}
\end{center}
\end{table*}

\textbf{Runtime Comparison.}
The runtime comparison is presented in Table~\ref{tab: FFCP time}. 
The results show that FFCP outperforms FCP with an approximate 50x speedup in runtime. Notably, since Vanilla CP is the most basic method and does not utilize additional tools, it exhibits the fastest runtime.

\textbf{Coverage.}
Table~\ref{tab1:1dim-results} summarizes the coverage for the one-dimensional response.
Experimental results indicate that the coverage of FFCP all exceeds the confidence level $1 - \alpha$, affirming its effectiveness as stated in Theorem~\ref{thm: ffcp effective}.

\textbf{Band Length.}
The band length is detailed in Table~\ref{tab1:1dim-results} for a one-dimensional response.
It is noteworthy that FFCP surpasses Vanilla CP by achieving a shorter band length, thereby validating the efficiency of the algorithm.

\subsection{Extensions of FFCP}
\label{subsec:Extensions}
This section provides the extensions of FFCP, which is divided into two parts. Section~\ref{subsubsec:Other-task} mainly discusses the applications of FFCP beyond regression tasks, specifically in image classification~\citep{angelopoulos2020uncertainty} and segmentation tasks. Section~\ref{subsubsec: Extensions-ffcp} focuses on how the gradient-level techniques in FFCP can be extended to other CP variants, \eg, CQR~\citep{romano2019conformalized} and LCP~\citep{guan2023localized}.

\subsubsection{Other Tasks}
\label{subsubsec:Other-task}
\paragraph{Classification.} We extend the FFCP techniques to classification tasks using the baseline RAPS~\citep{angelopoulos2020uncertainty} model, creating a new variant called FFRAPS (Fast Feature RAPS, Algorithm~\ref{alg:ffraps} in Appendix~\ref{appendix: FFRAPS}). 
According to the experimental findings presented in Table~\ref{tab1:classification}, FFRAPS returns shorter band 
lengths while preserving the coverage compared to RAPS under most model structures.

\begin{table*}[t]
\caption{{Comparison of FFRAPS with the state-of-the-art method RAPS on Imagenet-Val. 
The FFRAPS method outperforms RAPS in most datasets.}}
\label{tab1:classification}
\begin{center}
\begin{small}
\resizebox{\linewidth}{!}{
\begin{sc}

\begin{tabular}{ccccccc}
\toprule
Method  &  \multicolumn{2}{c}{accuracy}   &  \multicolumn{2}{c}{coverage} & \multicolumn{2}{c}{length} \\
\cmidrule(lr){2-3} \cmidrule(lr){4-5} \cmidrule(lr){6-7} 
Model &  TOP-1  &  TOP-5  
&  RAPS  &  FFRAPS &  RAPS  &  FFRAPS  \\
\midrule
\midrule
ResNeXt101 
&$0.793$\scriptsize{$\pm 0.001$} & $0.945$\scriptsize{$\pm 0.001$} 
&$0.908$\scriptsize{$\pm 0.002$} & $0.907$\scriptsize{$\pm 0.002$} 
&$2.012$\scriptsize{$\pm 0.035$} & $\textbf{2.006}$\scriptsize{$\pm 0.039$} 
\\
ResNet152  
&$0.784$\scriptsize{$\pm 0.001$} & $0.941$\scriptsize{$\pm 0.001$} 
&$0.909$\scriptsize{$\pm 0.003$} & $0.907$\scriptsize{$\pm 0.003$} 
&$2.144$\scriptsize{$\pm 0.034$} & $\textbf{2.128}$\scriptsize{$\pm 0.058$} 
\\
ResNet101 
&$0.774$\scriptsize{$\pm 0.001$} & $0.935$\scriptsize{$\pm 0.001$} 
&$0.906$\scriptsize{$\pm 0.004$} & $0.906$\scriptsize{$\pm 0.003$} 
&$2.348$\scriptsize{$\pm 0.151$} & $\textbf{2.256}$\scriptsize{$\pm 0.037$} 
\\
ResNet50 
&$0.761$\scriptsize{$\pm 0.001$} & $0.929$\scriptsize{$\pm 0.001$} 
&$0.907$\scriptsize{$\pm 0.004$} & $0.907$\scriptsize{$\pm 0.003$} 
&$\textbf{2.560}$\scriptsize{$\pm 0.104$} & $2.594$\scriptsize{$\pm 0.069$} 
\\
ResNet18  
&$0.698$\scriptsize{$\pm 0.001$} & $0.891$\scriptsize{$\pm 0.001$} 
&$0.906$\scriptsize{$\pm 0.003$} & $0.903$\scriptsize{$\pm 0.003$} 
&$4.560$\scriptsize{$\pm 0.147$} & $\textbf{4.434}$\scriptsize{$\pm 0.168$} 
\\
DenseNet161 
&$0.772$\scriptsize{$\pm 0.001$} & $0.936$\scriptsize{$\pm 0.001$} 
&$0.907$\scriptsize{$\pm 0.003$} & $0.907$\scriptsize{$\pm 0.002$} 
&$2.374$\scriptsize{$\pm 0.083$} & $\textbf{2.328}$\scriptsize{$\pm 0.056$} 
\\
VGG16 
&$0.716$\scriptsize{$\pm 0.001$} & $0.904$\scriptsize{$\pm 0.001$} 
&$0.904$\scriptsize{$\pm 0.002$} & $0.902$\scriptsize{$\pm 0.002$} 
&$3.566$\scriptsize{$\pm 0.098$} & $\textbf{3.521}$\scriptsize{$\pm 0.065$} 
\\
Inception  
&$0.696$\scriptsize{$\pm 0.001$} & $0.887$\scriptsize{$\pm 0.001$} 
&$0.903$\scriptsize{$\pm 0.003$} & $0.903$\scriptsize{$\pm 0.002$} 
&$5.410$\scriptsize{$\pm 0.350$} & $\textbf{5.407}$\scriptsize{$\pm 0.133$} 
\\
ShuffleNet  
&$0.694$\scriptsize{$\pm 0.001$} & $0.883$\scriptsize{$\pm 0.001$} 
&$0.902$\scriptsize{$\pm 0.001$} & $0.901$\scriptsize{$\pm 0.002$} 
&$5.001$\scriptsize{$\pm 0.121$} & $\textbf{4.971}$\scriptsize{$\pm 0.073$} 
\\
\bottomrule
\end{tabular}

\end{sc}
}
\end{small}
\end{center}
\end{table*}

\paragraph{Segmentation.} The gradient-level techniques of FFCP also prove effective in segmentation tasks. 
The segmentation results in Figure~\ref{fig:ffcp-seg-visualization} reveal that FFCP returns appropriate bands across different regions. 
Specifically, larger bands are observed in less informative areas, such as at object boundaries, whereas narrower bands are found in more informative regions. 
This validates the efficiency of FFCP in segmentation tasks.

\subsubsection{Extending FFCP into Other Models}
\label{subsubsec: Extensions-ffcp}
\paragraph{Conformalized Quantile Regression (CQR,~\citet{romano2019conformalized})} The gradient-level techniques of FFCP are adaptable to other conformal prediction frameworks like CQR. 
We develop FFCQR (Fast Feature CQR, Algorithm~\ref{alg: ffcqr} in Appendix~\ref{appendix: FFCQR}), which not only significantly reduces runtime compared to FCQR but also exhibits better performance than CQR.
Additionally, we observe that for the neural network significant level setting $[\alpha, 1-\alpha]$ in the CQR method, as the $\alpha$ value increases, approaching $1-\alpha$, the performance of FFCQR gradually improves. For detailed experimental results in Table~\ref{tab:FFCQR-time-compare},\ref{tab:ffcqr-meps19}, \ref{tab:ffcqr-bike}, \ref{tab:ffcqr-com}, \ref{tab:ffcqr-bio} in the Appendix~\ref{appendix: FFCQR}.

\paragraph{Locally Adaptive Conformal Prediction (LCP,~\citet{guan2023localized})} 
Integrating gradient-level techniques from FFCP into the LCP method leads to FFLCP (Fast Feature LCP, Algorithm~\ref{alg: fflcp} in Appendix~\ref{appendix: FFLCP}).
Experimental results in Table~\ref{tab:group-coverage-LCP-FFLCP} indicate that FFLCP outperforms LCP in terms of group coverage, highlighting an improvement in the adaptability of LCP to locally adaptive methods.

\begin{table*}[t]
\caption{Comparison of LCP and FFLCP in group coverage. We divide the datasets into three groups based on the size of $Y$, and calculate the coverage for each group, returning the maximum coverage. FFLCP shows the results for the 5-layer neural network.}
\label{tab:group-coverage-LCP-FFLCP}
\begin{center}
\begin{small}
\begin{sc}
\resizebox{\linewidth}{!}{
\begin{tabular}{ccccccc}
\toprule
Method  &  \multicolumn{1}{c}{LCP}    & \multicolumn{5}{c}{FFLCP} \\
\cmidrule(lr){2-2}  \cmidrule(lr){3-7} 
Dataset &  Coverage    
&  layer0  &  layer1 &  layer2 &  layer3 &  layer4  \\
\midrule
\midrule
synthetic  &  $87.02$\scriptsize{$\pm 1.00$} & $86.93$\scriptsize{$\pm 0.78$} & 
$86.57$\scriptsize{$\pm 0.88$} 
& $85.43$\scriptsize{$\pm 1.24$}
& $\textbf{87.11}$\scriptsize{$\pm 1.76$}
&  $87.02$\scriptsize{$\pm 1.00$}
\\
com  & $80.33$\scriptsize{$\pm 3.24$} & $\textbf{81.84}$\scriptsize{$\pm 2.52$} & 
$79.56$\scriptsize{$\pm 3.34$} 
& $77.42$\scriptsize{$\pm 4.12$}
& $79.41$\scriptsize{$\pm 2.88$}
& $80.33$\scriptsize{$\pm 3.24$}

\\
fb1 & $52.51$\scriptsize{$\pm 1.76$}&  $\textbf{78.61}$\scriptsize{$\pm 0.91$}&   
$76.39$\scriptsize{$\pm 1.21$}&   
$ 67.16$\scriptsize{$\pm 1.61$}
& $57.82$\scriptsize{$\pm 1.88$}
& $52.51$\scriptsize{$\pm 1.76$}

\\
fb2 &  $54.33$\scriptsize{$\pm 1.75$}&  $\textbf{75.86}$\scriptsize{$\pm 0.83$}&
$75.44$\scriptsize{$\pm 0.91$}&   
$70.45$\scriptsize{$\pm 0.99$}&
$60.18$\scriptsize{$\pm 1.73$}&
$54.33$\scriptsize{$\pm 1.75$}

\\
meps19  & $67.35$\scriptsize{$\pm 1.21$}  
& $\textbf{68.19}$\scriptsize{$\pm 2.31$}
& $66.44$\scriptsize{$\pm 2.01$}   
& $64.94$\scriptsize{$\pm 1.53$} 
& $67.14$\scriptsize{$\pm 1.24$}
& $67.35$\scriptsize{$\pm 1.21$}

\\
meps20  & $65.49$\scriptsize{$\pm 1.64$}  &  $\textbf{69.30}$\scriptsize{$\pm 1.09$}  
& $68.80$\scriptsize{$\pm 1.55$}  
& $65.14$\scriptsize{$\pm 1.44$}
& $65.47$\scriptsize{$\pm 1.99$}
& $65.49$\scriptsize{$\pm 1.64$}

\\
meps21 & $66.38$\scriptsize{$\pm 0.95$} &  $67.82$\scriptsize{$\pm 1.10$} 
& $\textbf{67.96}$\scriptsize{$\pm 1.21$} 
& $66.21$\scriptsize{$\pm 1.33$} 
& $65.54$\scriptsize{$\pm 1.02$}
& $66.38$\scriptsize{$\pm 0.95$}

\\
star & $77.20$\scriptsize{$\pm 3.97$} 
&  $\textbf{79.69}$\scriptsize{$\pm 2.88$} 
&  $79.28$\scriptsize{$\pm 1.72$} 
&  $77.47$\scriptsize{$\pm 5.21$}
& $77.33$\scriptsize{$\pm 4.12$}
& $77.20$\scriptsize{$\pm 3.97$}

\\
bio &  $81.10$\scriptsize{$\pm 0.61$}
&  $86.33$\scriptsize{$\pm 0.51$}
&  $86.06$\scriptsize{$\pm 0.50$}
&  $\textbf{86.78}$\scriptsize{$\pm 0.57$}
&  $83.26$\scriptsize{$\pm 0.71$}
&  $81.10$\scriptsize{$\pm 0.61$}

\\
blog & $48.99$\scriptsize{$\pm 1.01$}
&  $\textbf{61.01}$\scriptsize{$\pm 0.82$}
&  $55.10$\scriptsize{$\pm 1.12$}
&  $46.01$\scriptsize{$\pm 0.65$}
& $46.88$\scriptsize{$\pm 0.81$}
& $48.99$\scriptsize{$\pm 1.01$}

\\
bike &  $77.61$\scriptsize{$\pm 1.52$}
&  $81.02$\scriptsize{$\pm 1.73$}
&  $82.42$\scriptsize{$\pm 2.08$}
&  $82.97$\scriptsize{$\pm 1.29$}
&  $\textbf{84.41}$\scriptsize{$\pm 1.71$}
&  $77.61$\scriptsize{$\pm 1.52$}

\\
\bottomrule
\end{tabular}
}
\end{sc}
\end{small}
\end{center}
\end{table*}

\subsection{Discussion}
\label{subsec:more-discussion}

\begin{table*}[t]
\caption{Coverage and Band Length based on Gradient from Different Layers of Neural Networks. FFCP LAYER(·) represents using the gradient between the LAYER(·) and the output. The results in LAYER4 are equivalent to Vanilla CP. }
\label{tab:ffcp-res-each-layer}
\begin{center}
\begin{small}
\begin{sc}
\resizebox{\linewidth}{!}{
\begin{tabular}{ccccccccc}
\toprule
layer    &  \multicolumn{2}{c}{layer1} &  \multicolumn{2}{c}{layer2}&  \multicolumn{2}{c}{layer3}
&  \multicolumn{2}{c}{layer4}\\
\cmidrule(lr){1-1} \cmidrule(lr){2-9}
Dataset   
&  Coverage  &  Length &  Coverage  &  Length
&  Coverage  &  Length &  Coverage  &  Length \\
\midrule
\midrule
synthetic  
& $89.810$\scriptsize{$\pm 0.784$} & $0.184$\scriptsize{$\pm 0.018$} 
& $90.050$\scriptsize{$\pm 0.534$} & $0.184$\scriptsize{$\pm 0.017$}
& $89.960$\scriptsize{$\pm 0.910$} & $\textbf{0.182}$\scriptsize{$\pm 0.023$}
& $90.220$\scriptsize{$\pm 0.983$} & $0.189$\scriptsize{$\pm 0.033$}\\
com  
& $90.476$\scriptsize{$\pm 1.889$} & $1.878$\scriptsize{$\pm 0.224$} 
& $90.226$\scriptsize{$\pm 2.179$} & $\textbf{1.838}$\scriptsize{$\pm 0.180$}
& $89.674$\scriptsize{$\pm 1.465$} & $1.853$\scriptsize{$\pm 0.136$}
& $89.825$\scriptsize{$\pm 0.646$} & $2.037$\scriptsize{$\pm 0.188$}\\
fb1 
& $90.112$\scriptsize{$\pm 0.199$} & $3.540$\scriptsize{$\pm 0.327$}
& $90.212$\scriptsize{$\pm 0.357$} & $2.860$\scriptsize{$\pm 0.327$}
& $90.083$\scriptsize{$\pm 0.216$} & $1.597$\scriptsize{$\pm 0.052$}
& $90.168$\scriptsize{$\pm 0.220$} & $\textbf{1.472}$\scriptsize{$\pm 0.232$}\\
fb2 
& $89.953$ \scriptsize{$\pm 0.250$}& $3.530$\scriptsize{$\pm 0.384$}
& $89.897$\scriptsize{$\pm 0.235$} & $3.048$\scriptsize{$\pm 0.510$}
& $89.956$\scriptsize{$\pm 0.159$} & $2.077$\scriptsize{$\pm 0.517$}
& $89.868$\scriptsize{$\pm 0.062$} & $\textbf{1.425}$\scriptsize{$\pm 0.109$}\\
meps19  
& $90.155$\scriptsize{$\pm 0.643$}  &  $3.251$\scriptsize{$\pm 0.396$} 
& $90.352$\scriptsize{$\pm 0.469$} & $\textbf{3.134}$\scriptsize{$\pm 0.309$} 
& $90.440$\scriptsize{$\pm 0.183$} & $3.184$\scriptsize{$\pm 0.482$}
& $90.586$\scriptsize{$\pm 0.246$} & $3.795$\scriptsize{$\pm 0.640$}\\
meps20  
& $89.934$\scriptsize{$\pm 0.520$}  & $4.302$\scriptsize{$\pm 1.377$}
& $89.889$\scriptsize{$\pm 0.621$} & $3.573$\scriptsize{$\pm 0.488$}
& $89.615$\scriptsize{$\pm 0.661$} & $\textbf{3.268}$\scriptsize{$\pm 0.283$}
& $89.82$\scriptsize{$\pm 0.689$} & $3.817$\scriptsize{$\pm	0.308$}\\
meps21 
& $89.496$\scriptsize{$\pm 0.262$} & $3.443$\scriptsize{$\pm 0.487$}
& $89.623$\scriptsize{$\pm 0.275$} & $3.218$\scriptsize{$\pm 0.239$}
& $89.745$\scriptsize{$\pm 0.344$} & $\textbf{3.146}$\scriptsize{$\pm 0.506$}
& $90.026$\scriptsize{$\pm 0.301$} & $3.452$\scriptsize{$\pm 0.711$}\\
star  
& $90.901$\scriptsize{$\pm 1.732$} &  $0.221$\scriptsize{$\pm 0.002$} 
& $90.993$\scriptsize{$\pm 1.807$} & $0.217$\scriptsize{$\pm 0.003$}
& $91.039$\scriptsize{$\pm 1.442$} & $0.210$\scriptsize{$\pm 0.004$}
& $90.300$\scriptsize{$\pm 1.248$} & $\textbf{0.209}$\scriptsize{$\pm 0.004$}\\
bio 
& $89.937$\scriptsize{$\pm 0.391$}&  $2.292$\scriptsize{$\pm 0.077$}
& $90.022$\scriptsize{$\pm 0.375$} & $2.042$\scriptsize{$\pm 0.067$}
& $89.991$\scriptsize{$\pm 0.594$} & ${2.080}$\scriptsize{$\pm 0.063$}
& $90.127$\scriptsize{$\pm 0.476$} & $\textbf{1.822}$\scriptsize{$\pm 0.025$}\\
blog 
& $89.968$\scriptsize{$\pm 0.420$}&  $4.772$\scriptsize{$\pm 0.614$}
& $89.918$\scriptsize{$\pm 0.319$} & $3.404$\scriptsize{$\pm 0.598$}
& $90.059$\scriptsize{$\pm 0.101$} & $\textbf{2.741}$\scriptsize{$\pm 0.517$}
& $90.017$\scriptsize{$\pm 0.197$} & $3.058$\scriptsize{$\pm 0.873$}\\
bike 
& $89.917$\scriptsize{$\pm 0.791$} & $1.701$\scriptsize{$\pm 0.254$}
& $89.568$\scriptsize{$\pm 0.476$} & $1.138$\scriptsize{$\pm 0.114$}
& $89.495$\scriptsize{$\pm 0.579$} & $0.794$\scriptsize{$\pm 0.068$}
& $89.624$\scriptsize{$\pm 0.688$} & $\textbf{0.635}$\scriptsize{$\pm 0.030$}\\
\bottomrule
\end{tabular}}
\end{sc}
\end{small}
\end{center}
\end{table*}

\textbf{Robustness for FFCP.} The empirical performance of FFCP demonstrates its robustness, as seen in the ablation studies on splitting points. 
We demonstrate that coverage remains robust across different splitting points in neural networks, as detailed in Table~\ref{tab:layer-ablation} in Appendix~\ref{appendix: Robustness}.
Furthermore, the results from different layers of the FFCP network are consistent, as presented in Table~\ref{tab:ffcp-res-each-layer} 

\textbf{FFCP is similar to FCP.}
We compare the relationship between the scores of FCP and FFCP through experiments. Figure~\ref{fig:FFCP-to-FCP} indicates a positive correlation between the non-conformity scores of the two algorithms, suggesting that  FFCP shares similarities with FCP in score function.

\textbf{FFCP on untrained network.}
We propose that FFCP returns shorter band lengths through its deployment of deep representations from the gradients.
To test this view, we contrast FFCP's performance using an untrained neural network against a baseline model.
Using an incompletely trained neural network, FFCP's performance deteriorates and becomes comparable to that of Vanilla CP. 
This is due to the partially incorrect semantic information in the gradient, which \emph{misleads} FFCP.
We defer the results to Table~\ref{tab1:untrain-model} and related discussion to Appendix~\ref{appendix: fcp advantage untrained not work}.
\begin{table*}[t]
\caption{Untrained model comparison between Vanilla CP and FFCP. When the model has not been sufficiently trained, FFCP performs similarly to Vanilla CP. This means that the model's performance determines the quality of the feature information in the gradient. When the model performs poorly, the gradient information obtained by FFCP is inaccurate. On the other hand, this also suggests that FFCP effectively utilizes the feature information in the gradient when the model is well-trained.}

\label{tab1:untrain-model}
\begin{center}
\begin{small}
\begin{sc}
\begin{tabular}{ccccc}
\toprule
Method  &  \multicolumn{2}{c}{Vanilla CP}   &  \multicolumn{2}{c}{FFCP} \\
\cmidrule(lr){2-3} \cmidrule(lr){4-5} 
Dataset &  Coverage  &  Length  
&  Coverage  &  Length \\
\midrule
\midrule
synthetic  &  $90.23$\scriptsize{$\pm 0.45$} & $\textbf{2.34}$\scriptsize{$\pm 0.01$} & 
$90.22$\scriptsize{$\pm 0.96$} 
& $2.41$\scriptsize{$\pm 0.01$} \\
com  &  $90.33$\scriptsize{$\pm 1.81$} & ${4.86}$\scriptsize{$\pm 0.13$} & 
$90.43$\scriptsize{$\pm 1.99$} 
& $\textbf{4.73}$\scriptsize{$\pm 0.08$} \\
fb1 &  $90.18$\scriptsize{$\pm 0.19$}&  $3.57$\scriptsize{$\pm 0.09$}&  $90.10$\scriptsize{$\pm 0.13$}&   {$ \textbf{3.57}$\scriptsize{$\pm 0.08$}}\\
fb2 &  $90.16$\scriptsize{$\pm 0.11$}&  $3.66$\scriptsize{$\pm 0.11$}&  $90.12$\scriptsize{$\pm 0.14$}&   {$ \textbf{3.66}$\scriptsize{$\pm 0.06$}}\\
meps19  &  $90.80$\scriptsize{$\pm 0.43$}  &   {$ \textbf{4.33}$\scriptsize{$\pm 0.07$}}&     $90.85$\scriptsize{$\pm 0.58$}   &  $4.38$\scriptsize{$\pm 0.07$} \\
meps20  &  $90.15$\scriptsize{$\pm 0.55$}  &  $\textbf{4.41}$\scriptsize{$\pm 0.23$}  &  $90.27$\scriptsize{$\pm 0.63$}  &   {$4.46$\scriptsize{$\pm 0.25$} }\\
meps21 &  $89.80$\scriptsize{$\pm 0.45$} &  $\textbf{4.41}$\scriptsize{$\pm 0.17$} &  $	89.89$\scriptsize{$\pm 0.56$} &   $\textbf{4.41}$\scriptsize{$\pm 0.15$} \\
star &  $89.79$\scriptsize{$\pm 0.51$} &   {$ \textbf{1.88}$\scriptsize{$\pm 0.01$}} &  $89.98$\scriptsize{$\pm 0.56$} &  $1.94$\scriptsize{$\pm 0.01$}\\
bio &  $90.16$\scriptsize{$\pm 0.20$}&   {$ {4.09}$\scriptsize{$\pm 0.02$}}&  $90.07$\scriptsize{$\pm 0.14$}&  $\textbf{4.04}$\scriptsize{$\pm 0.02$}\\
blog &  $90.11$\scriptsize{$\pm 0.30$}&   {$ \textbf{2.53}$\scriptsize{$\pm 0.12$}}&  $90.12$\scriptsize{$\pm 0.28$}&  $2.55$\scriptsize{$\pm 0.14$}\\
bike &  $89.55$\scriptsize{$\pm 0.82$}&  $\textbf{4.56}$\scriptsize{$\pm 0.09$}&  $89.57$\scriptsize{$\pm 0.86$}&   {$4.60$\scriptsize{$\pm 0.10$}}\\
\bottomrule
\end{tabular}
\end{sc}
\end{small}
\end{center}
\end{table*}
\begin{figure}[t]
    \centering
    \subfigure{\includegraphics[height=0.4\linewidth]{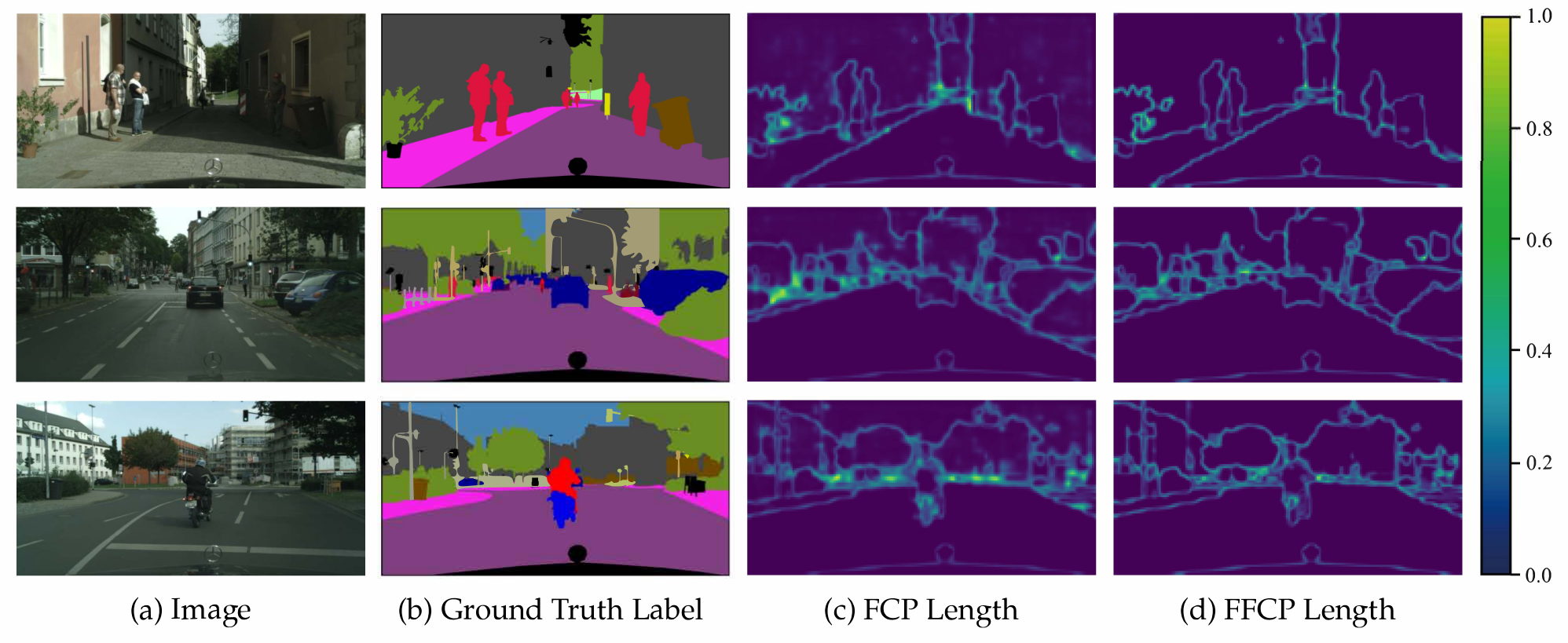}}
    \vspace{-0.2cm}
    \caption{The results of FFCP and FCP in image segmentation tasks show that brighter regions indicate areas of uncertainty. Both FFCP and FCP highlight uncertain regions around the edges, which is informative. FFCP, however, returns bands with sharper boundaries.}
    \label{fig:ffcp-seg-visualization}
\end{figure}
\begin{figure}[t]
    \centering
    \subfigure[Layer1]{\includegraphics[height=0.23\linewidth]{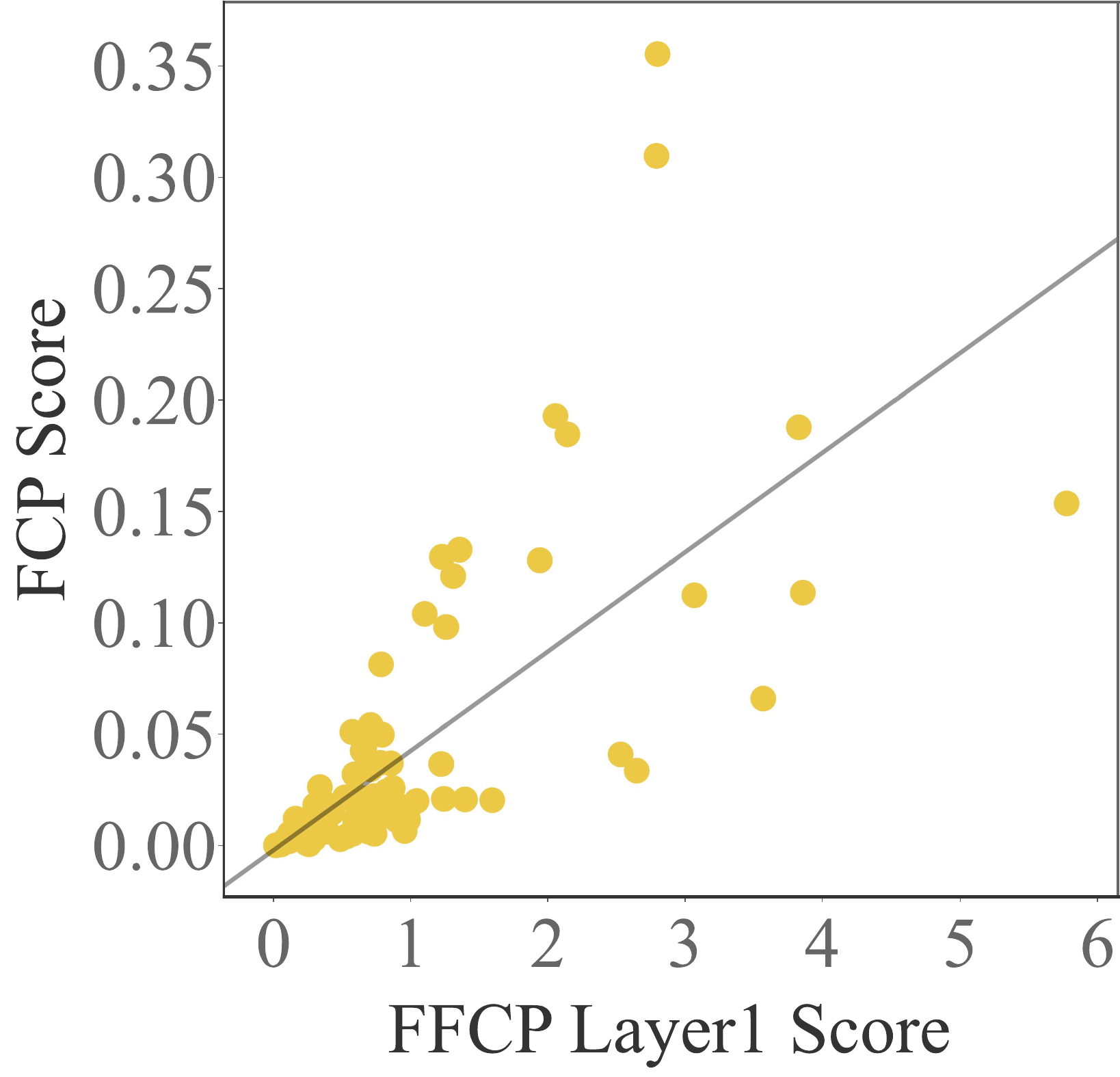}}
    \subfigure[Layer2]{\includegraphics[height=0.23\linewidth]{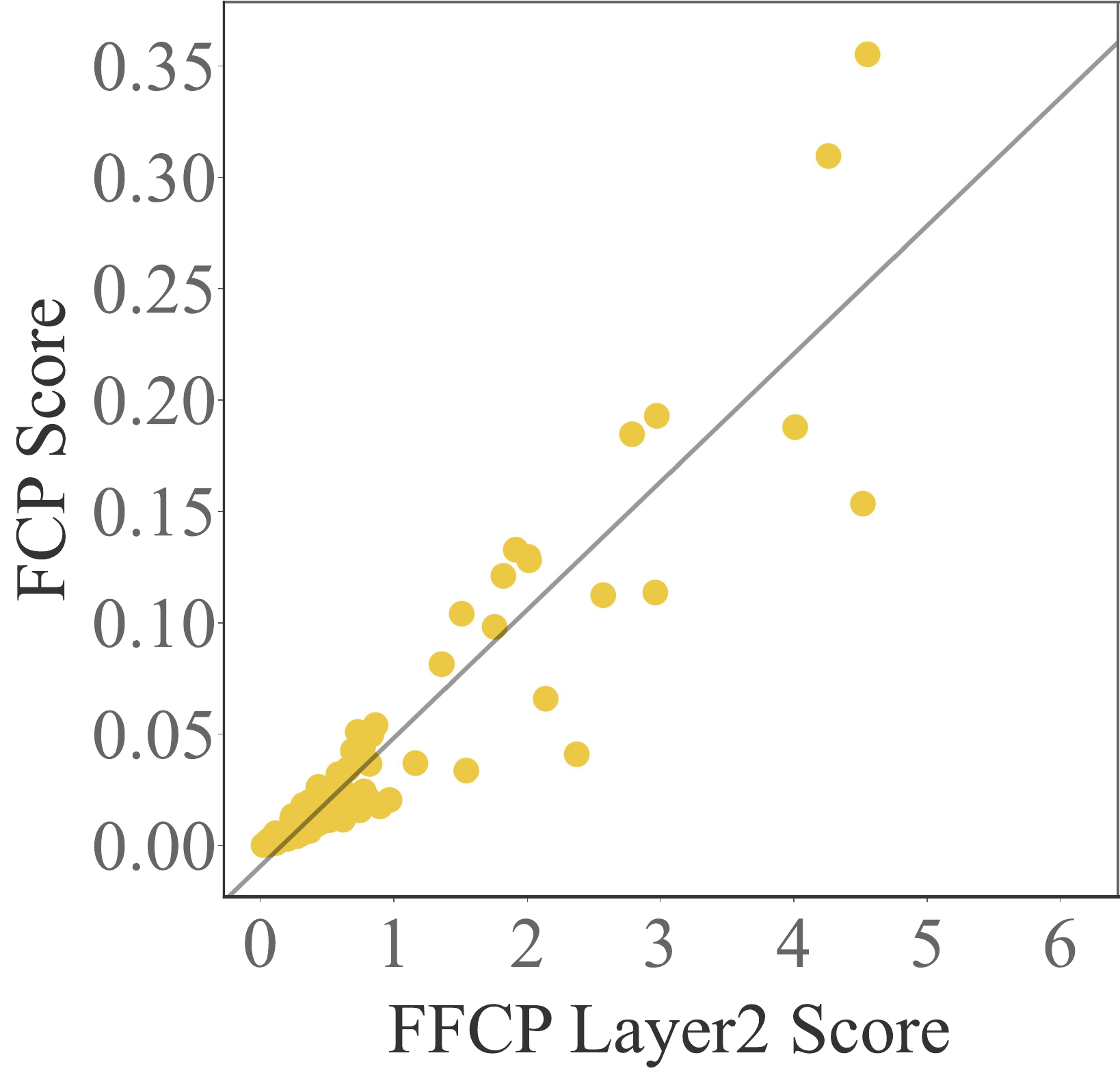}}
    \subfigure[Layer3]{\includegraphics[height=0.23\linewidth]{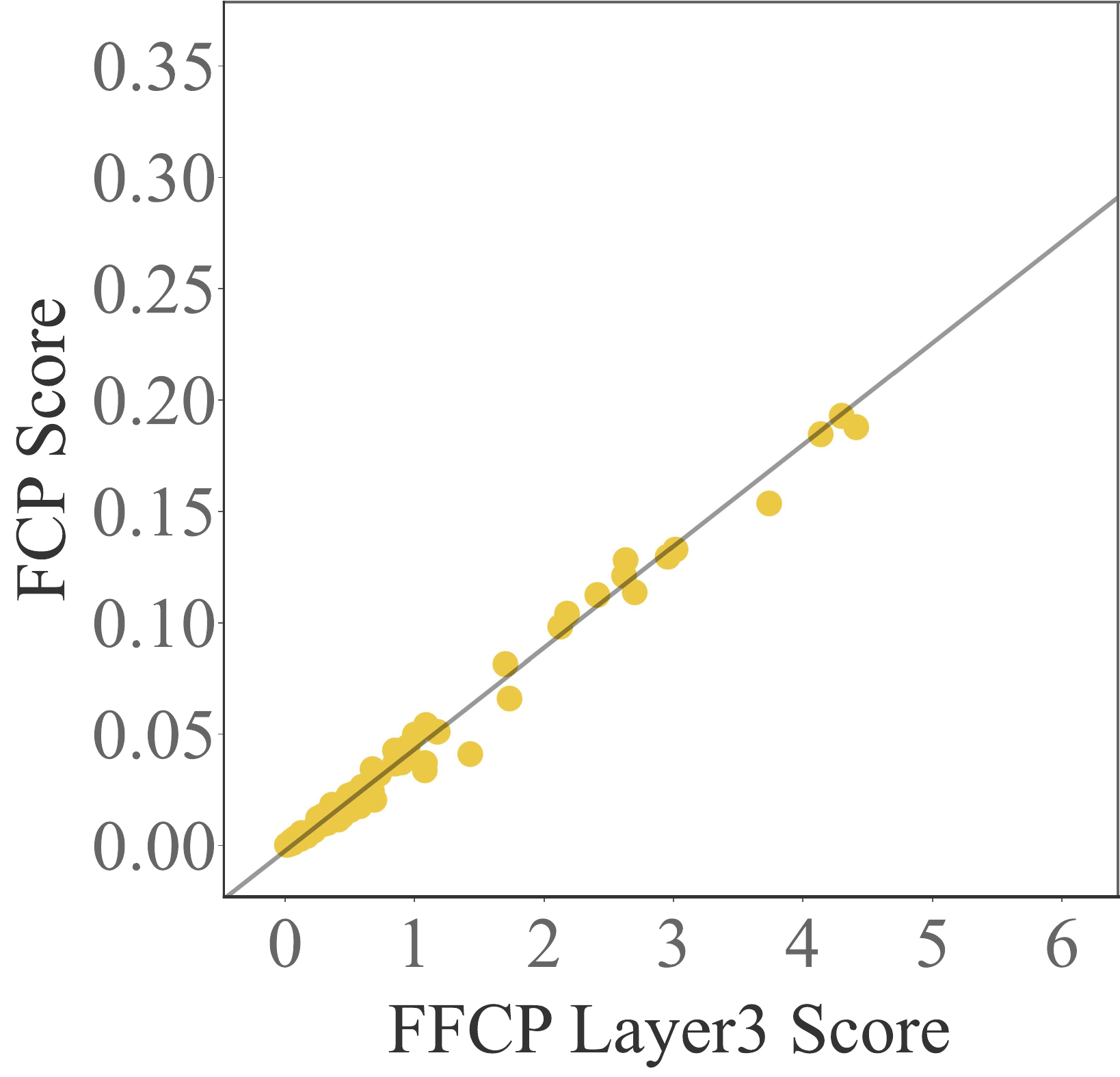}}
    \subfigure[Layer4]{\includegraphics[height=0.23\linewidth]{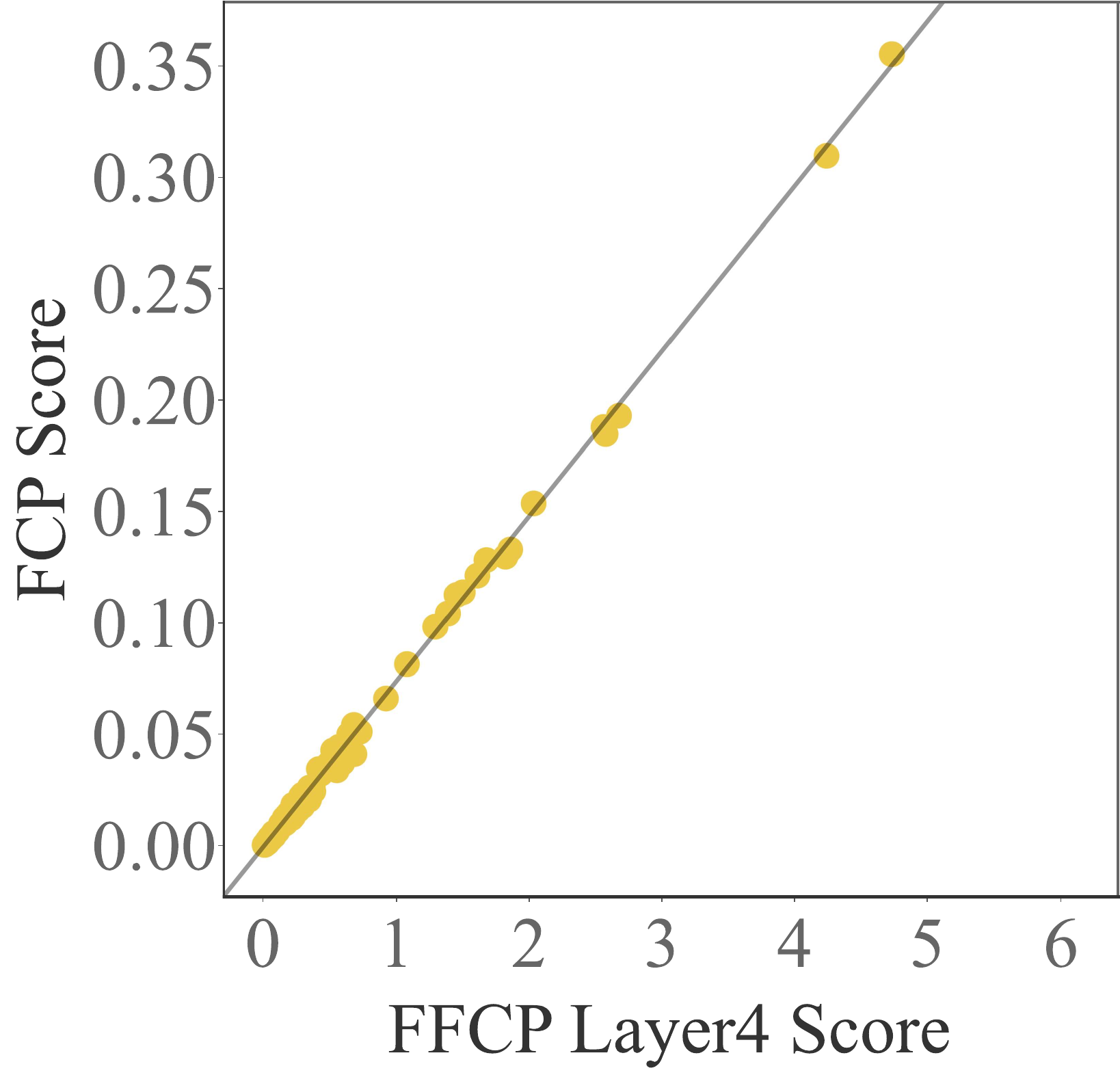}}
    \vspace{-0.2cm}
    \caption{Scatter plot of FCP Score and FFCP Score at different Layers.  The relationship between FCP Score and FFCP Score is positively correlated, which indicates that FFCP Score effectively replaces FCP Score.}
    \label{fig:FFCP-to-FCP}
\end{figure}

\section{Conclusion}
\label{sec:conclusion}
In this paper, we propose FFCP, a gradient-based non-conformity score that serves as a faster alternative to FCP, achieving processing times 50 times faster than FCP. Theoretically, we have established the effectiveness and efficiency of FFCP under mild assumptions. 
In our experiments, we apply FFCP to a variety of tasks, including basic regression tasks, classification tasks with FFRAPS based on RAPS, and segmentation. Additionally, we introduce FFCQR based on CQR, as well as FFLCP based on LCP.
These experimental results demonstrate the broad adaptability of our techniques.

For future work, the following points could be considered: (1) We use information from the first derivative and have not delved into higher-order derivatives, which may contain more feature information; (2) The computation of gradients becomes very costly as dimensionality increases, thus special methods are needed to handle large-scale tasks; and (3) The gradient at a single point may be unstable, especially when the gradient is zero, so methods such as random smoothing could be considered.

\clearpage
\bibliography{reference}
\bibliographystyle{unsrtnat}
\clearpage
\newpage
\appendix
\begin{center}
\huge{Appendix}
\end{center}

The complete proofs are presented in Section~\ref{appendix: proofs}, and the experiment details are outlined in Section~\ref{appendix: Experimental Details total}.

\section{Theoretical Proofs}
\label{appendix: proofs}
We prove the theoretical guarantee for FFCP concerning coverage (effectiveness) in Section~\ref{appendix: ffcp_effective} and band length (efficiency) in Section~\ref{appendix: ffcp_efficient}.

\subsection{Proofs of Theorem~\ref{thm: ffcp effective}}
\label{appendix: ffcp_effective}
The proof is based on the exchangeability of data (Assumption~\ref{assu: exchangeability}) on the calibration fold and test fold, hence the key step we need to derive is the exchangeability of the non-conformity scores $\s_{\text{ff}}(\X, \Y, g \circ h) = |\Y -  f(\X)|/\| \grad g(\hat{v})\|.$
We define the relevant symbols: $\cD_{\text{tra}}$ represents the train fold, $\cD_{\text{tes}}$ represents the test fold, $\cD_{\text{cal}}$ represents the calibration fold, and $\cD'=\{(X_i,Y_i)\}_{i\in [m]}$ is the intersection of the two folds. $m$ is the number of data points in $\cD'$.

 Similar to \cite{teng2022predictive}, we first prove that for any function $\tilde{h}: \cX\times\cY \to \bR$, which is independent of $\cD^\prime$, $\tilde{h}(X_i,Y_i)$ satisfies exchangeability. 
 For the CDF $F_R$ of $\tilde{h}$ and its perturbation CDF $F^\pi_R$, $\pi$ is a random perturbation. We can conclude,

\begin{equation}
\begin{split}
     &F_R (u_1, \dots, u_{n}\ | \ \cD_{\text{tra}}) \\
     =& \bP(\tilde{h}(\X_1, \Y_1)\leq u_1, \dots, \tilde{h}(\X_n, \Y_n) \leq u_n \ | \ \cD_{\text{tra}} ) \\
     =& \bP((X_1, Y_1) \in \cC_{\tilde{h}^{-1}}(u_1-), \dots, (X_n, Y_n) \in \cC_{\tilde{h}^{-1}}(u_n-) \ | \ \cD_{\text{tra}} )\\
    =& \bP((X_{\pi(1)}, Y_{\pi(1)}) \in \cC_{\tilde{h}^{-1}}(u_1-), \dots, (X_{\pi(n)}, Y_{\pi(n)}) \in \cC_{\tilde{h}^{-1}}(u_n-) \ | \ \cD_{\text{tra}} )\\
    =& \bP(\tilde{h}(X_{\pi(1)}, Y_{\pi(1)})\leq u_1, \dots, \tilde{h}(X_{\pi(n)}, Y_{\pi(n)}) \leq u_n \ | \ \cD_{\text{tra}} ) \\
    =& F_R^\pi (u_1, \dots, u_{n}\ | \ \cD_{\text{tra}}),
\end{split}
\end{equation}
where $\cC_{\tilde{h}^{-1}}(u-) = \{(\X, \Y) : \tilde{h}(\X, \Y) \leq u\}$.

Next, we need to show the non-conformity score function
\begin{equation}
    \s_{\text{ff}}(\X, \Y, g \circ h) = |\Y -  f(\X)|/\| \grad g(\hat{v})\|,
\end{equation}
which is independent of the dataset $\cD^\prime$. 

We can see that the non-conformity score $\s_{\text{ff}}(\X, \Y, g \circ h)$ on $\cD^\prime$ uses information from $g$ and $h$, both of which depend only on the training set $\cD_{\text{tra}}$. Moreover, calculating this non-conformity score in the Algorithm~\ref{alg: ffcp} uses only single-point information, not the entire dataset $\cD^\prime$.

By integrating the aforementioned, we deduce that the non-conformity scores $\s_{\text{ff}}(\X, \Y, g \circ h)$ on $\cD^\prime$ exhibit exchangeability. 
This exchangeability, as per Lemma 1 in ~\citet{tibshirani2019conformal}, lends theoretical support to the efficacy of FFCP.

\subsection{Proofs of Theorem~\ref{thm: ffcp efficient}}
\label{appendix: ffcp_efficient}
Our main conclusions are inspired by Theorem 4 in~\cite{teng2022predictive}. The details are as follows

\textbf{Definitions.} Let $\cP$ denote the overall population distribution. 
The calibration set $\cD_{\text{cal}}$ consists of $n$ samples drawn from $\cP$. 
We denote the specific distribution of these samples as $\cP^n$.
The model under consideration, $f=g \circ h$, includes $h$ as the feature extractor and $g$ as the prediction head, with $g$ assumed to be a continuous function. 
$V^o_\cD$ represent the individual length in output space, given data set $\cD$.
The term $Q_{1-\alpha}(R)$ represents the $(1-\alpha)$-quantile of the set $R$, which adjusted to include the value ${0}$. Furthermore, $\bM[\cdot]$ signifies the mean value of a set, and subtracting a real number from a set indicates that the subtraction is applied uniformly to all elements within the set.

\emph{Vanilla CP.}\phantom{a}
Let $V^o_{\cD_{\text{cal}}} = \{v_i^o\}_{i \in \cI_{\text{cal}}}$ denote the {individual} length in the output space for vanilla CP, given the calibration set $\cD_{\text{cal}}$.
Since vanilla CP returns band length with $1-\alpha$ quantile of non-conformity score, the resulting average band length is derived by $2Q_{1-\alpha} (V^o_{\cD_{\text{cal}}})$. 

\emph{Fast Feature CP.}\phantom{a}
According to the definition of FFCP, $V_\cD^f=V_\cD^o/\|\grad g(\hat{v})\|$, 

The resulting band length in FFCP is denoted by $2\bE_{(X^\prime, Y^\prime)\sim \cP} (\|\grad g(\hat{v^\prime})\|\cdot Q_{1-\alpha}(V^o_{\cD_{\text{cal}}}/\|\grad g(\hat{v}_{\text{cal}})\|)$.

\begin{theorem}
\label{thm: ffcp efficient}
(FFCP is provably more efficient). Assume that the non-conformity score is in norm-type. We assume a Holder assumption that there exist $\alpha>0, L>0$ such that $| \cH(x;X) - \cH(y;X) | \leq L |x - y|^\alpha$ for all $X$, where $\cH$ is any function. Then the feature space satisfies the following square conditions:

\vspace{-0.05cm}
\begin{enumerate}
    \item \textbf{Expansion.} The feature space expands the differences between individual length and their quantiles, namely, $L \bE_{\cD \sim \cP^n} \bM | Q_{1-\alpha}(V_\cD^o/\|\grad g(\hat{v})\|) - V_\cD^o/\|\grad g(\hat{v})\| |^\alpha < \bE_{\cD \sim \cP^n}
    \bM [Q_{1-\alpha}( V_\cD^o ) - V_\cD^o] - 2 \max\{L, 1\} (c/\sqrt{n})^{\min\{\alpha, 1\}}$.
    
    \item \textbf{Quantile Stability.} Given a calibration set $\cD_{\text{cal}}$, the quantile of the band length is stable in both feature space and output space, namely, $\bE_{\cD \sim \cP^n} | Q_{1-\alpha}(V_\cD^o/\|\grad g(\hat{v})\|) - Q_{1-\alpha}(V_{{\cD_{\text{cal}}}}^o/\|g(\grad \hat{v}_{\text{cal}})\|) |  \leq \frac{c}{\sqrt{n}}$ and $ \bE_{\cD \sim \cP^n} | Q_{1-\alpha}(V_{{\cD}}^o) - Q_{1-\alpha}(V_{{\cD_{\text{cal}}}}^o) |  \leq \frac{c}{\sqrt{n}}$.

    \vspace{-0.05cm}
\end{enumerate}

Then FFCP provably outperforms vanilla CP in terms of average band length, namely,
\[
\bE_{(X^\prime, Y^\prime)\sim \cP} (\|\grad g(\hat{v^\prime})\|\cdot Q_{1-\alpha}(V^o_{\cD_{\text{cal}}}/\|\grad g(\hat{v}_{\text{cal}})\|) < Q_{1-\alpha}(V_{D_\text{cal}}^0),
\]
where the expectation is taken over the calibration fold and the testing point $(X', Y')$.
\end{theorem}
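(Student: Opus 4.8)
The plan is to peel off the common factor of $2$ and compare the half-lengths directly. Conditioning on the trained model (so that $g$ and $h$, and hence the gradient norms, are fixed functions), the calibration fold and the test point are independent, so the expected FFCP half-length factorizes as $\bE_{(X',Y')}[\|\grad g(\hat v')\|]\cdot\bE_{\cD_{\text{cal}}}\big[Q_{1-\alpha}(V^o_{\cD_{\text{cal}}}/\|\grad g(\hat v_{\text{cal}})\|)\big]$, whereas the expected Vanilla half-length is $\bE_{\cD_{\text{cal}}}[Q_{1-\alpha}(V^o_{\cD_{\text{cal}}})]$. Writing $q^f_\cD:=Q_{1-\alpha}(V^o_\cD/\|\grad g(\hat v)\|)$ and $q^o_\cD:=Q_{1-\alpha}(V^o_\cD)$, the target reduces to showing that the FFCP expression is strictly smaller than the Vanilla one.

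Next I would introduce the map $\cH(\cdot\,;X)$ that sends a feature-space radius to the corresponding output-space half-length through the prediction head $g$, so that the Hölder hypothesis $|\cH(x;X)-\cH(y;X)|\le L|x-y|^\alpha$ applies. The key structural fact is that feeding an individual's own feature score back through this map recovers its output length, i.e. $\cH(V^o_i/\|\grad g(\hat v_i)\|)=V^o_i$, while FFCP instead evaluates the map at the shared quantile $q^f_\cD$. Applying Hölder pointwise and averaging over the dataset yields that the FFCP half-length is at most $\bM[V^o_\cD]+L\,\bM\big|q^f_\cD-V^o_\cD/\|\grad g(\hat v)\|\big|^\alpha$. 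For the Vanilla side I would use the elementary identity $q^o_\cD=\bM[V^o_\cD]+\bM[q^o_\cD-V^o_\cD]$. The common mean-length term $\bM[V^o_\cD]$ then cancels, so the comparison collapses exactly onto the two deviation quantities appearing in the Expansion condition.

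It then remains to pass from the quantiles computed on the fixed calibration set to those computed on a freshly resampled $\cD\sim\cP^n$, which is the purpose of the Quantile Stability condition. Replacing $q^f_{\cD_{\text{cal}}}$ by $q^f_\cD$ inside $|q^f-V^o/\|\grad g\||^\alpha$ (and likewise $q^o_{\cD_{\text{cal}}}$ by $q^o_\cD$) costs, after using $\big||a-v|^\alpha-|b-v|^\alpha\big|\lesssim|a-b|^{\min\{\alpha,1\}}$ together with the two stability inequalities, at most $2\max\{L,1\}(c/\sqrt n)^{\min\{\alpha,1\}}$ in expectation; the identity $\bE_{\cD\sim\cP^n}\bM_i[\phi(X_i)]=\bE_{X\sim\cP}[\phi]$ then identifies the dataset-mean with the test-point average. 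The Expansion condition is calibrated precisely so that its strict inequality beats exactly this slack, giving the desired strict inequality for the expected band lengths.

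The step I expect to be the main obstacle is the error bookkeeping in this last bridge: the Hölder exponent toggles between $\alpha$ and $\min\{\alpha,1\}$ according to whether $\alpha\le 1$, and one must ensure that every swap between a fixed-calibration quantile and a resampled quantile produces an error no larger than the $(c/\sqrt n)^{\min\{\alpha,1\}}$ scale absorbed by the Expansion slack. A related delicate point is justifying that $\cH(V^o_i/\|\grad g(\hat v_i)\|)=V^o_i$ holds (rather than holding only up to the first-order Taylor error that motivates FFCP in the first place), so that the resulting comparison is a genuine inequality and not merely a linearized approximation.
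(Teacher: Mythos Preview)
Your proposal follows essentially the same route as the paper: apply the H\"older bound pointwise and average to get $\bM[\|\grad g(\hat v)\|\cdot q^f_\cD]\le \bM[V^o_\cD]+L\,\bM|q^f_\cD-V^o_\cD/\|\grad g(\hat v)\||^\alpha$, set this against the trivial decomposition $q^o_\cD=\bM[V^o_\cD]+\bM[q^o_\cD-V^o_\cD]$ so the mean-length term cancels, invoke Expansion to obtain a strict inequality with the $2\max\{L,1\}(c/\sqrt n)^{\min\{\alpha,1\}}$ slack, and spend that slack on Quantile Stability to pass between the resampled $\cD$ and the fixed $\cD_{\text{cal}}$. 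The two concerns you flag (the $\alpha$ versus $\min\{\alpha,1\}$ bookkeeping in the stability swap, and whether $\cH(V^o_i/\|\grad g(\hat v_i)\|)=V^o_i$ holds exactly) are points on which the paper's own argument is equally informal; the paper simply asserts the H\"older step for the ``quantile function'' and applies stability without tracking the exponent, so your level of detail already exceeds it.
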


\begin{proof}[Proof of Theorem~\ref{thm: ffcp efficient}]
We first proof with \emph{Expansion} Assumption,
\begin{equation}
\begin{split}
    &L \bE_{\cD \sim \cP^n} \bM | Q_{1-\alpha}(V_\cD^o/\|\grad g(\hat{v})\|) - V_\cD^o/\|\grad g(\hat{v})\| |^\alpha < 
    \bE_{\cD \sim \cP^n}
    \bM [Q_{1-\alpha}( V_\cD^o ) - V_\cD^o] \\ 
    & - 2 \max\{L, 1\} (c/\sqrt{n})^{\min\{\alpha, 1\}}.
\end{split}
\end{equation}

And we can obtain
\begin{equation}
\begin{split}
    \bE_{\cD} \bM V^o_\cD <&  \bE_{\cD} Q_{1-\alpha}( V_\cD^o )  \\
    &- 2\max\{L, 1\} (c/\sqrt{n})^{\min\{\alpha, 1\}} - L \bE_{\cD \sim \cP^n} \bM | Q_{1-\alpha}(V_\cD^o/\|\grad g(\hat{v})\|) - V_\cD^o/\|\grad g(\hat{v})\| |^\alpha .
\end{split}
\end{equation}

According to Holder condition for quantile function, we obtain that $ \bM (\|\grad g(\hat{v})\|\cdot Q_{1-\alpha}(V_\cD^o/\|\grad g(\hat{v})\|)) \\
\le \bM  V^o_\cD + L\bM | Q_{1-\alpha}(V_\cD^o/\|\grad g(\hat{v})\|) - V_\cD^o/\|\grad g(\hat{v})\| |^\alpha$, therefore
\begin{equation}
    \bE_{\cD} \bM (\|\grad g(\hat{v})\|\cdot Q_{1-\alpha}(V_\cD^o/\|\grad g(\hat{v})\|)) < \bE_{\cD} Q_{1-\alpha}(V_{\cD}^o)  - 2 \max\{1, L\} [c/\sqrt{n}]^{\min\{1, \alpha\}}.
\end{equation}

As the \emph{Quantile Stability} assumption, we have that $\bE_{\cD \sim \cP^n} | Q_{1-\alpha}(V_\cD^o/\|\grad g(\hat{v})\|) - Q_{1-\alpha}(V_{{\cD_{\text{cal}}}}^o/\|\grad g(\hat{v}_{\text{cal}})\|) |  \\
\leq \frac{c}{\sqrt{n}}$ and $ \bE_{\cD \sim \cP^n} | Q_{1-\alpha}(V_{{\cD}}^o) - Q_{1-\alpha}(V_{{\cD_{\text{cal}}}}^o) |  \leq \frac{c}{\sqrt{n}}$.
Therefore, 
\begin{equation}
\begin{split}
&2\bE \bm (\|\grad g(\hat{v})\|\cdot Q_{1-\alpha}(V^o_{\cD_{\text{cal}}}/\|\grad g(\hat{v}_{\text{cal}})\|)  \\
<& 2 Q_{1-\alpha}(V_{\cD}^o)  - 2 \max\{1, L\} [c/\sqrt{n}]^{\min\{1, \alpha\}}. \\
    <& 2 Q_{1-\alpha}(V_{\cD}^o) .
\end{split}
\end{equation}

\end{proof}

\section{Experimental Details}
\label{appendix: Experimental Details total}
Section~\ref{appendix: Experimental Details} introduces the omitted experimental details. Section~\ref{appendix: Certifying Square Conditions} certifies the square conditions. Section~\ref{appendix: Robustness} discusses discusses the robustness of FFCP coverage with respect to the splitting point and across each network layer.
Section~\ref{appendix: fcp advantage untrained not work} demonstrates that FFCP performs similarly to vanilla CP in untrained neural networks, confirming that FFCP's efficiency is due to the semantic information trained in the feature space. 
Section~\ref{appendix: FFCQR} proposes FFCQR after applying the gradient-level techniques of FFCP to CQR. 
Section~\ref{appendix: FFLCP} proposes FFLCP after applying the gradient-level techniques of FFCP to LCP. 
Section~\ref{appendix: FFRAPS} proposes FFRAPS after applying the gradient-level techniques of FFCP to RAPS. 
Finally, Section~\ref{appendix:Additional-Experiment-Results} provides additional experimental results.

\subsection{Experimental Details}
\label{appendix: Experimental Details}

\textbf{Model Architecture.}
For the one-dimensional we employ a four-layer neural network, with each layer consisting of 64 dimensions. 
For the semantic segmentation experiment, we utilize a network architecture combining ResNet50 with two additional convolutional layers. 
We use ResNet50 as the base feature extractor $h$,  and the two subsequent convolution layers form the prediction head $g$.

\subsection{Verifying Square Conditions}
\label{appendix: Certifying Square Conditions}
We verify the square conditions in this section. 
The key component of the square conditions is \emph{Expansion} condition, which states that performing the quantile step does not result in a significant loss of efficiency.

For computational simplicity, We take exponent 
 $\alpha = 1$ and do not consider the Lipschitz factor $L$.
We next provide experiment results in Figure~\ref{fig:VCP-FFCP-score-Histogram} on comparing the distribution of the scores between Vanilla CP with FFCP.

From the figure, we observe that the overall distribution of FFCP non-conformity scores is closer to the quantile. This numerically validates that $\mathbf{M} \left| Q_{1-\alpha}(V_{\mathcal{D}}^o/|\nabla g(\hat{v})|) - V_{\mathcal{D}}^o/|\nabla g(\hat{v})|\right|$ is less than $\mathbf{M} \left[ Q_{1-\alpha}( V_{\mathcal{D}}^o ) - V_{\mathcal{D}}^o \right]$.

\begin{figure}[t]
    \centering
    \subfigure[Vanilla CP]{\includegraphics[height=0.3\linewidth]{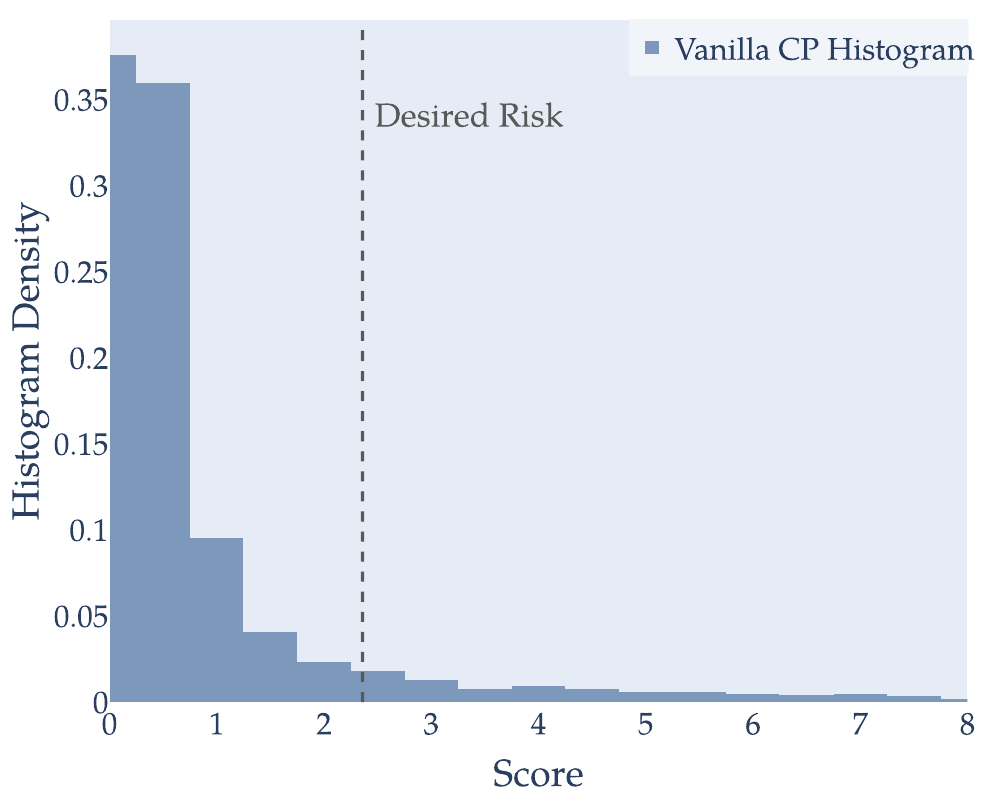}}
    \subfigure[FFCP]{\includegraphics[height=0.3\linewidth]{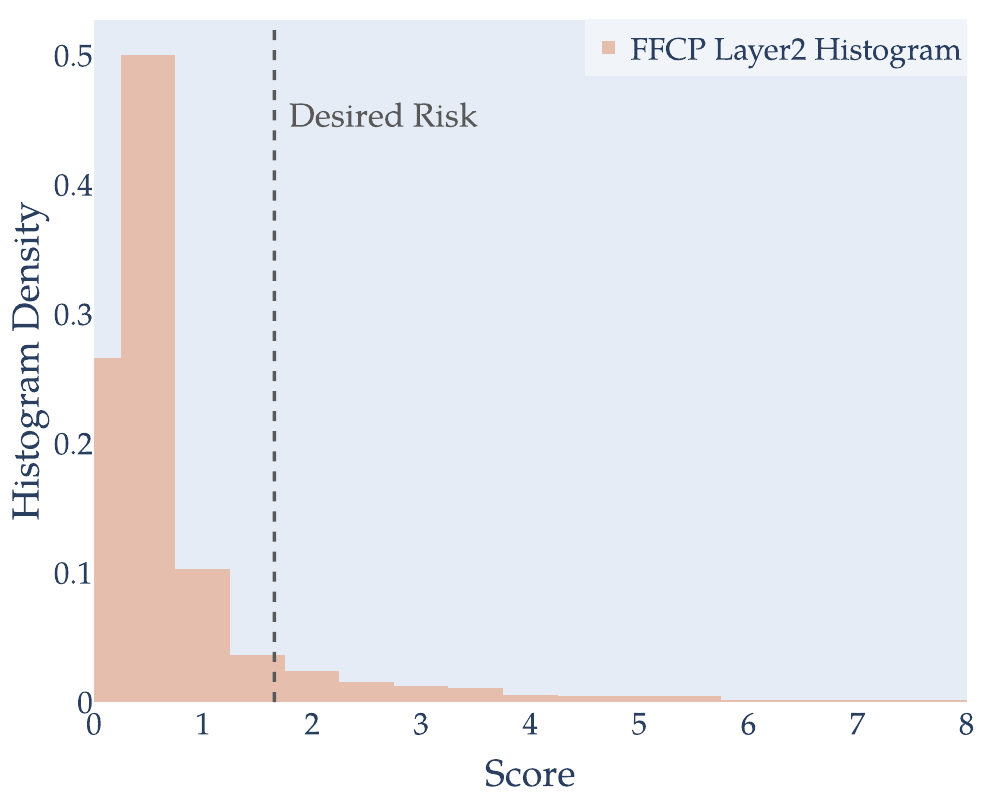}}
    \vspace{0cm}
    \caption{Empirical validation of Theorem~\ref{thm: ffcp efficient}. 
    We plot the score distributions and their corresponding quantiles ($\alpha=0.1$) of Vanilla CP (left) and FFCP (right).
    Compared to Vanilla CP, the non-conformity scores of FFCP are closer to their quantiles, leading to a shorter band.
    Compared to Vanilla, FFCP exhibits a more stable distribution with higher quantiles, leading to better performance for FFCP. FFCP selects layer 2 for display.}
    \label{fig:VCP-FFCP-score-Histogram}
\end{figure}

\subsection{Robustness of FFCP}
\label{appendix: Robustness}
To verify that the coverage by FFCP maintains its robustness despite changes in the splitting point, we performed a network split. 
The experimental results, detailed in Table~\ref{tab:layer-ablation}, demonstrate that FFCP is indeed robust.
\begin{table}[h]
\caption{Ablation study of the number of layers in $h$ and $g$ in unidimensional tasks. For the sake of avoiding redundancy, we set 
$\alpha=0.05$.
}
\label{tab:layer-ablation}
\begin{center}
\begin{scriptsize}
\begin{sc}
\resizebox{\linewidth}{!}{
\begin{tabular}{cccccccc}
\toprule
\multicolumn{2}{c}{Dataset} &  \multicolumn{2}{c}{facebook1}  &  \multicolumn{2}{c}{meps19}  &  \multicolumn{2}{c}{blog}\\
\cmidrule(lr){3-4} \cmidrule(lr){5-6} \cmidrule(lr){7-8}
\multicolumn{1}{c}{Method} & \multicolumn{1}{c}{Number($\hat{g} \circ \hat{h}$)} & Coverage  &  Length  &  Coverage  &  Length  &  Coverage  &  Length \\
\midrule
\midrule
Vanilla CP & $/$  &  $95.24\pm0.16$  & 
 $4.60\pm0.50$  &  $95.35\pm0.23$  &  $7.34\pm1.01$  &  $95.08\pm0.11$  &  $7.88\pm0.97$  \\ 
\cmidrule(lr){1-8}
\multirow{5}{*}{FFCP}  
&  $h:0 \quad g:4$   &  $94.88\pm0.19$  &  $5.35\pm0.42$  &  $95.18\pm0.40$  &  $5.36\pm0.52$  &  $94.97\pm0.31$  &  $8.28\pm0.59$  \\ 
{}    &   $h:1 \quad g:3$   &  $94.84\pm0.14$  &  $4.21\pm0.46$  &  $95.13\pm0.38$  &  $5.15\pm0.49$  &  $94.99\pm0.18$  &  $6.37\pm1.22$  \\ 
{}    &   $h:2 \quad g:2$   &  $95.14\pm0.13$  &  $2.48\pm0.09$  &  $95.19\pm0.36$  &  $5.57\pm0.87$  &  $95.08\pm0.12$  &  $5.36\pm0.93$  \\ 
{}    &   $h:3 \quad g:1$   &  $95.16\pm0.18$  &  $2.59\pm0.72$  &  $95.34\pm0.22$  &  $6.95\pm1.13$  &  $95.05\pm0.11$  &  $6.46\pm1.52$  \\ 
{}    &   $h:4 \quad g:0$   &  $95.24\pm0.16$  &  $4.60\pm0.50$  &  $95.35\pm0.23$  &  $7.34\pm1.01$  &  $95.08\pm0.11$  &  $7.88\pm0.97$ \\
\bottomrule
\end{tabular}
}
\end{sc}
\end{scriptsize}
\end{center}
\end{table}

\subsection{FFCP works due to semantic information in feature space}
\label{appendix: fcp advantage untrained not work}
One of our primary advantages is that FFCP leverages the semantic information of gradient in feature space. 
This is due to the fact that gradient-level techniques in feature space improve efficiency via the robust feature embedding abilities of well-trained neural networks.

On the other hand, when the base model is untrained and initialized randomly, lacking meaningful semantic representation in gradient, the band length produced by FFCP is comparable to Vanilla CP. 
For results, see Table~\ref{tab1:untrain-model}.

\subsection{FFCQR}
\label{appendix: FFCQR}
This section highlights the adaptability of FFCP's gradient-level techniques, showing their suitability for a wide range of existing conformal prediction algorithms.
We choose Conformalized Quantile Regression (CQR, \citet{DBLP:conf/nips/RomanoPC19}) to propose Fast Feature Conformalized Quantile Regression (FFCQR).
The fundamental concept is similar to FFCP Algorithm~\ref{alg: ffcp}, where calibration steps are performed within the gradient information.
FFCQR algorithm is proposed in Algorithm~\ref{alg: ffcqr}.
\begin{algorithm*}[h]
\caption{Fast Feature Conformalized Quantile Regression (FFCQR)} 
\label{alg: ffcqr} 
\begin{algorithmic}[1] 
\REQUIRE Confidence level $\alpha$, 
dataset $\cD = \{ (\Xs, \Ys)\}_{i \in \cI}$, test point $\X^\prime$;

\STATE{Randomly split the dataset $\mathcal{D}$ into a training fold $\cD_{\text{tra}} \triangleq (\X_i, \Y_i)_{i \in \mathcal{I}_{\text{tra}}}$ together with a calibration fold $\cD_{\text{cal}} \triangleq (\X_i, \Y_i)_{i \in \mathcal{I}_{\text{cal}}}$;}

\STATE{Train a base machine learning model $f^{\text{lo}}=g^{\text{lo}}\circ h(\cdot)$ and $f^{\text{hi}}=g^{\text{hi}}\circ h(\cdot)$ using $\cD_{\text{tra}}$ to estimate the quantile of response $Y_i$, which returns $[f^{\text{lo}}(\X_i), f^{\text{hi}}(\X_i)]$;}

\STATE{For each $i \in \mathcal{I}_{\text{cal}}$, calculate the non-conformity score $\tilde{R}_i^{\text{lo}} = (f^{\text{lo}}(\X_i) - Y_i)/\| \grad {g}^{\text{lo}}(\hat{v}_i)\|$ and $\tilde{R}_i^{\text{hi}} = (Y_i - f^{\text{hi}}(\X_i))/\| \grad {g}^{\text{hi}}(\hat{v}_i)\|,$ where $\grad {g}(\cdot)$ denote the gradient of ${g}(\cdot)$ on the feature $\hat{v}_i\triangleq h(\X_i)$, namely $\grad {g}^{\text{lo}}(\hat{v}_i) = \frac{\mathrm{d} {g^{\text{lo}} \circ {h}(\X_i)}}{\mathrm{d} {{h}(\X_i)}}$ and $\grad {g}^{\text{hi}}(\hat{v}_i) = \frac{\mathrm{d} {g^{\text{hi}} \circ {h}(\X_i)}}{\mathrm{d} {{h}(\X_i)}}$ }

\STATE{Calculate the $(1-\alpha)$-th quantile $Q_{1-\alpha}$ of the distribution $\frac{1}{|\cI_{\text{cal}}| + 1} \sum_{i \in \mathcal{I}_{\text{cal}}}  \delta_{\tilde{R}_i} + {\delta}_{\infty}$, where $\tilde{R}_i=\max\left\{\tilde{R}_i^{\text{lo}},\tilde{R}_i^{\text{hi}}\right\}$}

\ENSURE $\cC_{1-\alpha}^{\text{ffcqr}}(\X^\prime) =  \left[ f^{\text{lo}}(\X^\prime) - \|\nabla{g}^{\text{lo}}(\hat{v}^\prime)\| \cdot Q_{1-\alpha}, f^{\text{hi}}(\X^\prime) + \|\nabla {g}^{\text{hi}}(\hat{v}^\prime)\|\cdot Q_{1-\alpha}\right]$, where $\hat{v}^\prime = h(\X^\prime)$.
\end{algorithmic} 
\end{algorithm*}

We summarize run time in Table~\ref{tab:FFCQR-time-compare} and the experiments result in Table~\ref{tab:ffcqr-meps19} (meps19), Table~\ref{tab:ffcqr-com} (com), and Table~\ref{tab:ffcqr-bike} (bike). 
FFCQR reduces runtime compared to FCQR, while achieving better efficiency compared to CQR.

Furthermore, we have observed that as the values of $[\alpha, 1-\alpha]$ used by the neural networks in all CQR methods (CQR, FCQR and FFCQR) become increasingly closer in the training process (The level difference between [0.1, 0.9] is 0.8, while the level difference between [0.49, 0.51] is 0.02, with the difference gradually decreasing), the band length returned by FFCQR gradually narrows. 
This implies that our method holds an advantage on returned band length when the narrower neural network  confidence level.

\begin{table*}[h]
\caption{{Time Comparison among CQR, FCQR and FFCQR. For quantile regression tasks, FFCQR also demonstrates more efficient performance. The last column represents the speed improvement factor of FFCQR compared to FCQR. The time unit is in seconds.}
}
\label{tab:FFCQR-time-compare}
\begin{center}
\begin{small}
\begin{sc}
\begin{tabular}{ccccc}
\toprule
Dataset  &  \multicolumn{1}{c}{CQR}  &  \multicolumn{1}{c}{FCQR}
&  \multicolumn{1}{c}{FFCQR}     & \multicolumn{1}{c}{FASTER} \\
\midrule
\midrule
synthetic  
& $0.0125$\scriptsize{$\pm 0.0062$} 
& $0.3237$\scriptsize{$\pm 0.0152$} 
& $0.0742$\scriptsize{$\pm 0.0091$} & 4x  \\
com  
& $0.0045$\scriptsize{$\pm 0.0015$} 
& $0.2730$\scriptsize{$\pm 0.1088$} 
& $0.0210$\scriptsize{$\pm 0.0011$} & 13x \\
fb1 
& $0.0446$\scriptsize{$\pm 0.0157$}
& $1.7276$\scriptsize{$\pm 0.1389$}&  
$0.2532$\scriptsize{$\pm 0.0166$} & 7x\\
fb2 
& $0.0812$\scriptsize{$\pm 0.0187$}
& $3.9967$\scriptsize{$\pm 0.7330$}&  
$0.0617$\scriptsize{$\pm 0.0123$} & 65x\\
meps19  
& $0.0187$\scriptsize{$\pm 0.0018$} 
& ${0.7671}$\scriptsize{$\pm 0.0438$}
& $0.1189$\scriptsize{$\pm 0.0048$} & 6x\\
meps20  
& $0.0438$\scriptsize{$\pm 0.0079$} 
& $1.1876$\scriptsize{$\pm 0.2206$}  
& $0.1505$\scriptsize{$\pm 0.0138$}  & 8x\\
meps21 
& $0.0187$\scriptsize{$\pm 0.0027$} 
& $0.8004$\scriptsize{$\pm 0.0657$}
& $0.1120$\scriptsize{$\pm 0.0053$} & 7x \\
star 
& $0.0047$\scriptsize{$\pm 0.0009$} 
& $0.2352$\scriptsize{$\pm 0.0419$} 
& $0.0214$\scriptsize{$\pm 0.0005$} & 11x\\
bio 
& $0.0774$\scriptsize{$\pm 0.0541$}
& $6.9365$\scriptsize{$\pm 4.5494$}
& $0.6473$\scriptsize{$\pm 0.4879$}& 11x\\
blog 
& $0.1121$\scriptsize{$\pm 0.0153$}
& $1.9591$\scriptsize{$\pm 0.1346$}&  
$0.3941$\scriptsize{$\pm 0.0618$} & 5x\\
bike 
& $0.0138$\scriptsize{$\pm 0.0045$}
& $ 1.8528$\scriptsize{$\pm 2.3969$}
& $0.2382$\scriptsize{$\pm 0.3261$}& 6x\\
\bottomrule
\end{tabular}
\end{sc}
\end{small}
\end{center}
\end{table*}

\begin{table*}[h]
\caption{Coverage and Band Length at Different Confidence Levels Used By the Neural Networks in CQR methods with \emph{Meps19} detaset. FFCQR yields shorter band lengths compared to CQR.}
\label{tab:ffcqr-meps19}
\begin{center}
\begin{small}
\begin{sc}
\resizebox{\linewidth}{!}{
\begin{tabular}{cccccccccccc}
\toprule
\multicolumn{2}{c}{Confidence Levels used by NN}  
&  \multicolumn{2}{c}{[0.1, 0.9]}  &  \multicolumn{2}{c}{[0.2, 0.8]} &  \multicolumn{2}{c}{[0.3, 0.7]}&  \multicolumn{2}{c}{[0.4, 0.6]}
&  \multicolumn{2}{c}{[0.49, 0.51]}\\
\cmidrule(lr){1-2}  \cmidrule(lr){3-4}  \cmidrule(lr){5-6}  \cmidrule(lr){7-8}  \cmidrule(lr){9-10}  \cmidrule(lr){11-12}  
\multicolumn{2}{c}{Metrics}  &  Coverage  &  Length  
&  Coverage  &  Length  &  Coverage  &  Length  
&  Coverage  &  Length  &  Coverage  &  Length  \\
\midrule
\midrule
\multicolumn{2}{c}{Vanilla CQR}  
&  $90.28$\scriptsize{$\pm 0.47$}  &  $2.43$\scriptsize{$\pm 0.11$}  
&  $90.19$\scriptsize{$\pm 0.46$}  &  $2.58$\scriptsize{$\pm 0.45$}  
&  $90.63$\scriptsize{$\pm 0.32$}  &  $2.93$\scriptsize{$\pm 0.51$}  
&  $90.48$\scriptsize{$\pm 0.42$}  &  $3.47$\scriptsize{$\pm 0.16$}  
&  $90.44$\scriptsize{$\pm 0.36$}  &  $3.48$\scriptsize{$\pm 0.08$}  \\
\multicolumn{2}{c}{FCQR}  
&  $91.32$\scriptsize{$\pm 0.37$}  &  $1.50$\scriptsize{$\pm 0.37$}  
&  $90.26$\scriptsize{$\pm 0.33$}  &  $2.61$\scriptsize{$\pm 2.01$}
&  $90.45$\scriptsize{$\pm 0.54$}  &  $2.30$\scriptsize{$\pm 2.38$}
&  $90.58$\scriptsize{$\pm 0.33$}  &  $6.11$\scriptsize{$\pm 0.89$}  
&  $90.47$\scriptsize{$\pm 0.45$}  &  $4.59$\scriptsize{$\pm 1.73$}  \\
\multirow{5}{*}{FFCQR}  &  layer0  
&  $90.29$\scriptsize{$\pm 0.60$}  &  $2.61$\scriptsize{$\pm 0.13$}  
&  $90.22$\scriptsize{$\pm 0.26$}  &  $2.58$\scriptsize{$\pm 0.54$}  
&  $90.31$\scriptsize{$\pm 0.43$}  &  $3.03$\scriptsize{$\pm 0.90$}
&  $89.95$\scriptsize{$\pm 0.29$}  &  $5.30$\scriptsize{$\pm 0.57$}  
&  $89.84$\scriptsize{$\pm 0.12$}  &  $5.96$\scriptsize{$\pm 1.26$}  \\
{}  &  layer1  
&  $90.29$\scriptsize{$\pm 0.57$}  &  $2.56$\scriptsize{$\pm 0.13$}  
&  $90.10$\scriptsize{$\pm 0.33$}  &  $2.49$\scriptsize{$\pm 0.52$}  
&  $90.34$\scriptsize{$\pm 0.46$}  &  $2.92$\scriptsize{$\pm 0.85$}  
&  $90.02$\scriptsize{$\pm 0.33$}  &  $5.07$\scriptsize{$\pm 0.50$}  
&  $89.88$\scriptsize{$\pm 0.24$}  &  $5.71$\scriptsize{$\pm 1.22$}  \\
{}  &  layer2  
&  $90.14$\scriptsize{$\pm 0.60$}  &  $2.34$\scriptsize{$\pm 0.12$}  
&  $90.24$\scriptsize{$\pm 0.65$}  &  $2.22$\scriptsize{$\pm 0.32$}
&  $90.34$\scriptsize{$\pm 0.43$}  &  $2.60$\scriptsize{$\pm 0.68$}
&  $89.96$\scriptsize{$\pm 0.40$}  &  $4.10$\scriptsize{$\pm 0.22$}  
&  $89.97$\scriptsize{$\pm 0.33$}  &  $4.76$\scriptsize{$\pm 1.17$}  \\
{}  &  layer3  
&  $90.21$\scriptsize{$\pm 0.45$}  &  $2.18$\scriptsize{$\pm 0.12$}  
&  $90.14$\scriptsize{$\pm 0.42$}  &  $2.10$\scriptsize{$\pm 0.19$}  
&  $90.35$\scriptsize{$\pm 0.36$}  &  $2.34$\scriptsize{$\pm 0.19$}  
&  $90.28$\scriptsize{$\pm 0.33$}  &  $2.76$\scriptsize{$\pm 0.15$}  
&  $89.86$\scriptsize{$\pm 0.49$}  &  $3.16$\scriptsize{$\pm 0.62$}  \\
{}  &  layer4  
&  $90.28$\scriptsize{$\pm 0.47$}  &  $2.43$\scriptsize{$\pm 0.11$}  
&  $90.19$\scriptsize{$\pm 0.46$}  &  $2.58$\scriptsize{$\pm 0.45$}  
&  $90.63$\scriptsize{$\pm 0.32$}  &  $2.93$\scriptsize{$\pm 0.51$}  
&  $90.48$\scriptsize{$\pm 0.42$}  &  $3.47$\scriptsize{$\pm 0.16$}  
&  $90.44$\scriptsize{$\pm 0.36$}  &  $3.48$\scriptsize{$\pm 0.08$}  \\
\bottomrule
\end{tabular}
}
\end{sc}
\end{small}
\end{center}
\end{table*}

\begin{table*}[h]
\caption{Coverage and Band Length at Different Confidence Levels Used By the Neural Networks in CQR methods with \emph{com} dataset. FFCQR yields shorter band lengths compared to CQR.}
\label{tab:ffcqr-com}
\begin{center}
\begin{small}
\begin{sc}
\resizebox{\linewidth}{!}{
\begin{tabular}{cccccccccccc}
\toprule
\multicolumn{2}{c}{Confidence Levels Uesd by NN}  &  \multicolumn{2}{c}{[0.1, 0.9]}  &  \multicolumn{2}{c}{[0.2, 0.8]} &  \multicolumn{2}{c}{[0.3, 0.7]}&  \multicolumn{2}{c}{[0.4, 0.6]}
&  \multicolumn{2}{c}{[0.49, 0.51]}\\
\cmidrule(lr){1-2} \cmidrule(lr){3-4} \cmidrule(lr){5-6}  \cmidrule(lr){7-8}  \cmidrule(lr){9-10} \cmidrule(lr){11-12} 
\multicolumn{2}{c}{Metrics} &  Coverage  &  Length  
&  Coverage  &  Length  &  Coverage  &  Length
&  Coverage  &  Length  &  Coverage  &  Length  \\
\midrule
\midrule
\multicolumn{2}{c}{Vanilla CQR}  
&  $89.87$\scriptsize{$\pm 1.68$}  &  $1.57$\scriptsize{$\pm 0.12$}  
&  $90.13$\scriptsize{$\pm 0.89$}  &  $1.71$\scriptsize{$\pm 0.18$} 
&  $89.87$\scriptsize{$\pm 1.06$}  &  $1.74$\scriptsize{$\pm 0.16$}  
&  $89.27$\scriptsize{$\pm 0.94$}  &  $2.07$\scriptsize{$\pm 0.55$}  
&  $89.57$\scriptsize{$\pm 0.49$}  &  $1.99$\scriptsize{$\pm 0.12$}  \\
\multicolumn{2}{c}{FCQR}  
&  $90.83$\scriptsize{$\pm 1.53$}  &  $1.19$\scriptsize{$\pm 0.19$}  
&  $90.43$\scriptsize{$\pm 1.32$}  &  $0.49$\scriptsize{$\pm 0.38$}
&  $90.23$\scriptsize{$\pm 1.13$}  &  $0.37$\scriptsize{$\pm 0.06$}  
&  $90.18$\scriptsize{$\pm 1.77$}  &  $0.20$\scriptsize{$\pm 0.05$}  
&  $89.47$\scriptsize{$\pm 0.85$}  &  $0.23$\scriptsize{$\pm 0.07$}  \\
\multirow{5}{*}{FFCQR}  &  layer0  
&  $88.92$\scriptsize{$\pm 2.78$}  &  $1.62$\scriptsize{$\pm 0.12$}  
&  $89.62$\scriptsize{$\pm 1.75$}  &  $1.67$\scriptsize{$\pm 0.07$}  
&  $91.53$\scriptsize{$\pm 0.97$}  &  $1.62$\scriptsize{$\pm 0.12$}  
&  $89.77$\scriptsize{$\pm 1.64$}  &  $1.80$\scriptsize{$\pm 0.27$}  
&  $89.82$\scriptsize{$\pm 1.07$}  &  $1.76$\scriptsize{$\pm 0.11$}  \\
{} &  layer1  
&  $88.67$\scriptsize{$\pm 2.40$}  &  $1.59$\scriptsize{$\pm 0.12$}  
&  $89.57$\scriptsize{$\pm 1.03$}  &  $1.64$\scriptsize{$\pm 0.08$}
&  $90.58$\scriptsize{$\pm 1.21$}  &  $1.57$\scriptsize{$\pm 0.13$}  
&  $89.82$\scriptsize{$\pm 1.75$}  &  $1.78$\scriptsize{$\pm 0.33$}  
&  $89.12$\scriptsize{$\pm 1.40$}  &  $1.74$\scriptsize{$\pm 0.09$}  \\
{} &  layer2  
&  $89.77$\scriptsize{$\pm 2.14$}  &  $1.58$\scriptsize{$\pm 0.12$}  
&  $89.92$\scriptsize{$\pm 1.98$}  &  $1.63$\scriptsize{$\pm 0.12$}
&  $90.53$\scriptsize{$\pm 0.43$}  &  $1.64$\scriptsize{$\pm 0.14$}  
&  $89.67$\scriptsize{$\pm 1.28$}  &  $1.89$\scriptsize{$\pm 0.38$}
&  $88.77$\scriptsize{$\pm 0.75$}  &  $1.78$\scriptsize{$\pm 0.11$}  \\
{} &  layer3  
&  $90.08$\scriptsize{$\pm 2.28$}  &  $1.58$\scriptsize{$\pm 0.12$}
&  $89.92$\scriptsize{$\pm 1.22$}  &  $1.67$\scriptsize{$\pm 0.15$}  
&  $90.33$\scriptsize{$\pm 0.86$}  &  $1.73$\scriptsize{$\pm 0.13$}  
&  $89.62$\scriptsize{$\pm 0.83$}  &  $2.03$\scriptsize{$\pm 0.54$}  
&  $89.27$\scriptsize{$\pm 0.66$}  &  $1.93$\scriptsize{$\pm 0.11$}  \\
{}  &  layer4  
&  $89.87$\scriptsize{$\pm 1.68$}  &  $1.57$\scriptsize{$\pm 0.12$}  
&  $90.13$\scriptsize{$\pm 0.89$}  &  $1.71$\scriptsize{$\pm 0.18$} 
&  $89.87$\scriptsize{$\pm 1.06$}  &  $1.74$\scriptsize{$\pm 0.16$}  
&  $89.27$\scriptsize{$\pm 0.94$}  &  $2.07$\scriptsize{$\pm 0.55$}  
&  $89.57$\scriptsize{$\pm 0.49$}  &  $1.99$\scriptsize{$\pm 0.12$}  \\
\bottomrule
\end{tabular}
}
\end{sc}
\end{small}
\end{center}
\end{table*}

\begin{table*}[h]
\caption{Coverage and Band Length at Different Confidence Levels Used By the Neural Networks in CQR methods with \emph{bike} dataset. FFCQR yields shorter band lengths compared to CQR.}
\label{tab:ffcqr-bike}
\begin{center}
\begin{small}
\begin{sc}
\resizebox{\linewidth}{!}{
\begin{tabular}{cccccccccccc}
\toprule
\multicolumn{2}{c}{Confidence Levels Uesd by NN}  &  \multicolumn{2}{c}{[0.1, 0.9]}  &  \multicolumn{2}{c}{[0.2, 0.8]} &  \multicolumn{2}{c}{[0.3, 0.7]}&  \multicolumn{2}{c}{[0.4, 0.6]}
&  \multicolumn{2}{c}{[0.49, 0.51]}\\
\cmidrule(lr){1-2} \cmidrule(lr){3-4} \cmidrule(lr){5-6}  \cmidrule(lr){7-8}  \cmidrule(lr){9-10} \cmidrule(lr){11-12} 
\multicolumn{2}{c}{Metrics} &  Coverage  &  Length  
&  Coverage  &  Length  &  Coverage  &  Length
&  Coverage  &  Length  &  Coverage  &  Length  \\
\midrule
\midrule
\multicolumn{2}{c}{Vanilla CQR}  &  $89.38$\scriptsize{$\pm 0.73$}  &
 $0.82$\scriptsize{$\pm 0.07$}  &  $89.99$\scriptsize{$\pm 0.69$}  &  $0.73$\scriptsize{$\pm 0.03$} 
&  $89.63$\scriptsize{$\pm 0.84$}  &  $0.77$\scriptsize{$\pm 0.03$}  
&  $90.25$\scriptsize{$\pm 0.62$}  &  $0.84$\scriptsize{$\pm 0.02$}  
&  $89.72$\scriptsize{$\pm 0.51$}  &  $0.96$\scriptsize{$\pm 0.08$}  \\
\multicolumn{2}{c}{FCQR}  
&  $90.25$\scriptsize{$\pm 0.67$}  &  $0.58$\scriptsize{$\pm 0.15$}  
&  $90.14$\scriptsize{$\pm 0.63$}  &  $0.65$\scriptsize{$\pm 0.13$}  
&  $89.77$\scriptsize{$\pm 0.76$}  &  $0.71$\scriptsize{$\pm 0.18$}  
&  $89.93$\scriptsize{$\pm 0.35$}  &  $0.82$\scriptsize{$\pm 0.07$}  
&  $89.98$\scriptsize{$\pm 0.97$}  &  $0.74$\scriptsize{$\pm 0.08$}  \\
\multirow{5}{*}{FFCQR}  &  layer0  
&  $89.91$\scriptsize{$\pm 0.38$}  &  $0.91$\scriptsize{$\pm 0.07$}  
&  $89.84$\scriptsize{$\pm 0.44$}  &  $0.97$\scriptsize{$\pm 0.02$}  
&  $89.42$\scriptsize{$\pm 0.83$}  &  $1.20$\scriptsize{$\pm 0.04$}  
&  $89.75$\scriptsize{$\pm 0.54$}  &  $1.64$\scriptsize{$\pm 0.13$}  
&  $89.61$\scriptsize{$\pm 0.59$}  &  $1.79$\scriptsize{$\pm 0.08$}  \\
{} &  layer1  &  
$89.57$\scriptsize{$\pm 0.33$}  &  $0.90$\scriptsize{$\pm 0.07$}  
&  $89.83$\scriptsize{$\pm 0.25$}  &$0.90$\scriptsize{$\pm 0.05$}
&  $89.44$\scriptsize{$\pm 0.42$}  &  $1.04$\scriptsize{$\pm 0.07$}  
&  $89.72$\scriptsize{$\pm 0.43$}  &  $1.25$\scriptsize{$\pm 0.05$}  
&  $89.62$\scriptsize{$\pm 0.73$}  &  $1.31$\scriptsize{$\pm 0.06$}  \\
{} &  layer2  &  $89.73$\scriptsize{$\pm 0.29$}  &  $0.87$\scriptsize{$\pm 0.07$}  &  $89.70$\scriptsize{$\pm 0.27$}  &  {$0.79$\scriptsize{$\pm 0.03$}}  
&  $89.63$\scriptsize{$\pm 0.69$}  &  $0.83$\scriptsize{$\pm 0.03$}  
&  $89.14$\scriptsize{$\pm 0.42$}  &  $0.92$\scriptsize{$\pm 0.04$}
&  $89.44$\scriptsize{$\pm 0.41$}  &  $0.98$\scriptsize{$\pm 0.04$}  \\
{} &  layer3  &  $89.49$\scriptsize{$\pm 0.34$}  &  {$0.84$\scriptsize{$\pm 0.06$}}  &  $89.62$\scriptsize{$\pm 0.48$}  &  $0.69$\scriptsize{$\pm 0.02$}  
&  $89.58$\scriptsize{$\pm 0.74$}  &  $0.69$\scriptsize{$\pm 0.02$}  
&  $89.86$\scriptsize{$\pm 0.37$}  &  $0.70$\scriptsize{$\pm 0.01$}  
&  $89.57$\scriptsize{$\pm 0.88$}  &  $0.78$\scriptsize{$\pm 0.07$}  \\
{}  &  layer4  &  $89.38$\scriptsize{$\pm 0.73$}  &  $0.82$\scriptsize{$\pm 0.07$}  &  $89.99$\scriptsize{$\pm 0.69$}  &  $0.73$\scriptsize{$\pm 0.03$}  
&  $89.63$\scriptsize{$\pm 0.84$}  &  $0.77$\scriptsize{$\pm 0.03$}  
&  $90.25$\scriptsize{$\pm 0.62$}  &  $0.84$\scriptsize{$\pm 0.02$}  
&  $89.72$\scriptsize{$\pm 0.51$}  &  $0.96$\scriptsize{$\pm 0.08$}\\
\bottomrule
\end{tabular}
}
\end{sc}
\end{small}
\end{center}
\end{table*}

\begin{table*}[h]
\caption{Coverage and Band Length at Different Confidence Levels Used By the Neural Networks in CQR methods with \emph{bio} dataset. FFCQR yields shorter band lengths compared to CQR.}
\label{tab:ffcqr-bio}
\begin{center}
\begin{small}
\begin{sc}
\resizebox{\linewidth}{!}{
\begin{tabular}{cccccccccccc}
\toprule
\multicolumn{2}{c}{Confidence Levels Uesd by NN}  &  \multicolumn{2}{c}{[0.1, 0.9]}  &  \multicolumn{2}{c}{[0.2, 0.8]} &  \multicolumn{2}{c}{[0.3, 0.7]}&  \multicolumn{2}{c}{[0.4, 0.6]}
&  \multicolumn{2}{c}{[0.49, 0.51]}\\
\cmidrule(lr){1-2} \cmidrule(lr){3-4} \cmidrule(lr){5-6}  \cmidrule(lr){7-8}  \cmidrule(lr){9-10} \cmidrule(lr){11-12} 
\multicolumn{2}{c}{Metrics} &  Coverage  &  Length  
&  Coverage  &  Length  &  Coverage  &  Length
&  Coverage  &  Length  &  Coverage  &  Length  \\
\midrule
\midrule
\multicolumn{2}{c}{Vanilla CQR}  &  $89.89$\scriptsize{$\pm 0.41$}  &
 $1.42$\scriptsize{$\pm 0.02$}  &  $89.84$\scriptsize{$\pm 0.27$}  &  $1.45$\scriptsize{$\pm 0.02$} 
&  $89.87$\scriptsize{$\pm 0.27$}  &  $1.61$\scriptsize{$\pm 0.02$}  
&  $90.07$\scriptsize{$\pm 0.31$}  &  $1.86$\scriptsize{$\pm 0.03$}  
&  $90.16$\scriptsize{$\pm 0.40$}  &  $2.00$\scriptsize{$\pm 0.03$}  \\
\multicolumn{2}{c}{FCQR}  
&  $90.18$\scriptsize{$\pm 0.35$}  &  $0.95$\scriptsize{$\pm 0.50$}  
&  $90.45$\scriptsize{$\pm 0.45$}  &  $2.09$\scriptsize{$\pm 0.41$}
&  $90.16$\scriptsize{$\pm 0.48$}  &  $1.84$\scriptsize{$\pm 0.43$}  
&  $90.25$\scriptsize{$\pm 0.46$}  &  $2.37$\scriptsize{$\pm 0.76$}  
&  $90.21$\scriptsize{$\pm 0.46$}  &  $2.02$\scriptsize{$\pm 0.34$}  \\
\multirow{5}{*}{FFCQR}  &  layer0  &  $89.74$\scriptsize{$\pm 0.32$}  &  ${1.47}$\scriptsize{$\pm 0.01$}  &  $89.98$\scriptsize{$\pm 0.22$}  &  $1.56$\scriptsize{$\pm 0.04$}  
&  $89.89$\scriptsize{$\pm 0.25$}  &  $1.73$\scriptsize{$\pm 0.04$}  
&  $89.87$\scriptsize{$\pm 0.24$}  &  $2.22$\scriptsize{$\pm 0.15$}  
&  $89.64$\scriptsize{$\pm 0.20$}  &  $2.55$\scriptsize{$\pm 0.06$}  \\
{} &  layer1  
&  $89.77$\scriptsize{$\pm 0.33$}  &  $1.45$\scriptsize{$\pm 0.01$}  
&  $89.99$\scriptsize{$\pm 0.21$}  &  $1.48$\scriptsize{$\pm 0.03$} 
&  $89.92$\scriptsize{$\pm 0.37$}  &  $1.59$\scriptsize{$\pm 0.03$}  
&  $89.92$\scriptsize{$\pm 0.21$}  &  $1.99$\scriptsize{$\pm 0.12$}  
&  $89.69$\scriptsize{$\pm 0.28$}  &  $2.21$\scriptsize{$\pm 0.04$}  \\
{} &  layer2  
&  $89.77$\scriptsize{$\pm 0.40$}  &  $1.43$\scriptsize{$\pm 0.02$}  
&  $90.01$\scriptsize{$\pm 0.23$}  &  $1.41$\scriptsize{$\pm 0.01$} 
&  $90.02$\scriptsize{$\pm 0.32$}  &  $1.49$\scriptsize{$\pm 0.03$}  
&  $90.01$\scriptsize{$\pm 0.49$}  &  $1.76$\scriptsize{$\pm 0.11$}
&  $89.79$\scriptsize{$\pm 0.35$}  &  $1.94$\scriptsize{$\pm 0.07$}  \\
{} &  layer3  &  $89.75$\scriptsize{$\pm 0.41$}  &  {$1.41  $\scriptsize{$\pm 0.02$}}  
&  $89.98$\scriptsize{$\pm 0.34$}  &  $1.38$\scriptsize{$\pm 0.02$}  
&  $89.93$\scriptsize{$\pm 0.41$}  &  $1.47$\scriptsize{$\pm 0.01$}  
&  $90.07$\scriptsize{$\pm 0.12$}  &  $1.68$\scriptsize{$\pm 0.04$}  
&  $89.97$\scriptsize{$\pm 0.34$}  &  $1.78$\scriptsize{$\pm 0.02$}  \\
{}  &  layer4  &  $89.89$\scriptsize{$\pm 0.41$}  &  ${1.42}$\scriptsize{$\pm 0.02$}  &  $89.84$\scriptsize{$\pm 0.27$}  &  $1.45$\scriptsize{$\pm 0.02$}  
&  $89.87$\scriptsize{$\pm 0.27$}  &  $1.61$\scriptsize{$\pm 0.02$}  
&  $90.07$\scriptsize{$\pm 0.31$}  &  $1.86$\scriptsize{$\pm 0.03$}  
&  $90.16$\scriptsize{$\pm 0.40$}  &  $2.00$\scriptsize{$\pm 0.03$}\\
\bottomrule
\end{tabular}
}
\end{sc}
\end{small}
\end{center}  
\end{table*}

\subsection{Group coverage}
\label{appendix: FFLCP}
\emph{Group coverage} is represented by the conditional probability \(\mathbb{P}(Y \in \mathcal{C}(X) | X)\). 
The test dataset was categorized into three groups by splitting response $Y$ based on the lower and upper tertiles, and we have reported the minimum coverage for each group. 

We present our results in two parts: (a) we present the group coverage provided by Vanilla CP, FCP, FFCP, detailed in Table~\ref{tab:group-coverage-VCP-FCP-FFCP} and (b) the group coverage provided by LCP and FFLCP, as shown in Table~\ref{tab:group-coverage-LCP-FFLCP}.

Analyzing the experimental results, we believe that the group coverage achieved through gradient-level techniques in FFCP reflects an improvement over Vanilla CP, albeit with moderate overall performance. 
We note that the group coverage of gradient-level conformal prediction is contingent upon its vanilla version. 
That is, when the vanilla version demonstrates satisfying group coverage, the gradient-level version tends to mirror this result. 
Thus, despite FFCP outperforming Vanilla CP, the overall performance is still considered average.

LCP, developed specifically to enhance group coverage, inherently achieves higher coverage. 
Experimental results further reveal that FFLCP surpasses LCP, demonstrating the superiority of our gradient-level techniques.

\begin{table*}[t]
\caption{{Comparison of Vanilla CP, FCP and FFCP in group coverage.}}
\label{tab:group-coverage-VCP-FCP-FFCP}
\begin{center}
\begin{small}
\begin{sc}
\resizebox{\linewidth}{!}{
\begin{tabular}{cccccccc}
\toprule
Method  &  \multicolumn{1}{c}{Vanilla CP}   &  \multicolumn{1}{c}{FCP} & \multicolumn{5}{c}{FFCP} \\
\cmidrule(lr){2-2} \cmidrule(lr){3-3} \cmidrule(lr){4-8} 
Dataset &  Coverage  &  Coverage  
&  layer0  &  layer1 &  layer2 &  layer3 &  layer4  \\
\midrule
\midrule
synthetic  
& $87.08$\scriptsize{$\pm 1.03$} 
& $87.92$\scriptsize{$\pm 1.08$} 
& $86.96$\scriptsize{$\pm 0.81$} 
& $86.63$\scriptsize{$\pm 0.79$}
& $85.64$\scriptsize{$\pm 1.13$}
& $\textbf{88.46}$\scriptsize{$\pm 1.44$}
& $87.08$\scriptsize{$\pm 1.03$}\\
com  
& $79.41$\scriptsize{$\pm 3.12$} 
& ${79.57}$\scriptsize{$\pm 2.96$}
& $\textbf{82.00}$\scriptsize{$\pm 3.18$} 
& ${79.41}$\scriptsize{$\pm 3.62$}
& ${78.64}$\scriptsize{$\pm 4.35$}
& ${78.65}$\scriptsize{$\pm 3.62$}
& ${79.41}$\scriptsize{$\pm 3.12$}
\\
fb1 
& $56.69$\scriptsize{$\pm 1.35$}
& $57.34$\scriptsize{$\pm 1.12$}
& $\textbf{79.20}$\scriptsize{$\pm 0.95$}
& $76.75$\scriptsize{$\pm 1.42$}
& $68.09$\scriptsize{$\pm 1.76$}
& $59.33$\scriptsize{$\pm 1.91$}
& $56.69$\scriptsize{$\pm 1.35$}
\\
fb2 
& $57.98$\scriptsize{$\pm 1.28$}
& $58.72$\scriptsize{$\pm 0.87$}
& $\textbf{76.27}$\scriptsize{$\pm 0.92$}
& ${75.64}$\scriptsize{$\pm 0.91$}
& ${70.86}$\scriptsize{$\pm 0.89$}
& ${62.43}$\scriptsize{$\pm 1.15$}
& ${57.98}$\scriptsize{$\pm 1.28$}
\\
meps19  
& $73.78$\scriptsize{$\pm 1.08$}  
& $\textbf{73.82}$\scriptsize{$\pm 0.91$}
& $70.90$\scriptsize{$\pm 2.29$}   
& $70.51$\scriptsize{$\pm 2.28$} 
& $72.09$\scriptsize{$\pm 1.25$}
& $73.53$\scriptsize{$\pm 1.00$}
& $73.78$\scriptsize{$\pm 1.08$}
\\
meps20  
& $72.21$\scriptsize{$\pm 1.47$}  
& $\textbf{72.33}$\scriptsize{$\pm 1.46$}  
& $70.42$\scriptsize{$\pm 0.88$} 
& $70.13$\scriptsize{$\pm 1.42$}
& $69.51$\scriptsize{$\pm 0.79$}
& $71.17$\scriptsize{$\pm 2.01$}
& $72.21$\scriptsize{$\pm 1.47$}
\\
meps21 
& $71.38$\scriptsize{$\pm 0.20$} 
& $\textbf{72.02}$\scriptsize{$\pm 0.70$} 
& $69.40$\scriptsize{$\pm 1.61$} 
& $69.83$\scriptsize{$\pm 1.44$} 
& $69.81$\scriptsize{$\pm 1.68$}
& $70.85$\scriptsize{$\pm 0.82$}
& $71.39$\scriptsize{$\pm 0.20$}
\\
star 
& $\textbf{83.45}$\scriptsize{$\pm 3.09$} 
& $83.17$\scriptsize{$\pm 3.47$}
& $82.89$\scriptsize{$\pm 1.51$} 
& $81.22$\scriptsize{$\pm 2.55$}
& $81.22$\scriptsize{$\pm 3.60$}
& $83.03$\scriptsize{$\pm 2.07$}
& $\textbf{83.45}$\scriptsize{$\pm 3.09$}
\\
bio &  $81.00$\scriptsize{$\pm 0.61$}&   {$ {84.45}$\scriptsize{$\pm 0.88$}}&  $87.31$\scriptsize{$\pm 0.27$}&  ${87.27}$\scriptsize{$\pm 0.46$}
& $\textbf{88.31}$\scriptsize{$\pm 0.72$}
& ${84.20}$\scriptsize{$\pm 0.70$}
& ${81.00}$\scriptsize{$\pm 0.61$}
\\
blog
& $58.32$\scriptsize{$\pm 0.90$}
& $60.43$\scriptsize{$\pm 1.46$}
& $\textbf{65.21}$\scriptsize{$\pm 0.58$}
& $59.03$\scriptsize{$\pm 1.03$}
& $54.55$\scriptsize{$\pm 0.77$}
& $55.76$\scriptsize{$\pm 1.26$}
& $58.32$\scriptsize{$\pm 0.90$}
\\
bike 
& $77.55$\scriptsize{$\pm 1.40$}
& ${86.25}$\scriptsize{$\pm 0.87$}
& $\textbf{95.36}$\scriptsize{$\pm 1.32$}
& $94.23$\scriptsize{$\pm 1.40$}
& ${95.06}$\scriptsize{$\pm 1.06$}
& ${84.65}$\scriptsize{$\pm 1.85$}
& ${77.55}$\scriptsize{$\pm 1.40$}
\\
\bottomrule
\end{tabular}
}
\end{sc}
\end{small}
\end{center}
\end{table*}

\begin{algorithm*}[t]
\caption{Fast Feature Localized Conformal Prediction (FFLCP)} 
\label{alg: fflcp} 
\begin{algorithmic}[1] 
\REQUIRE  Confidence level $\alpha$, dataset $\cD = \{ (\Xs, \Ys)\}_{i \in \cI}$, tesing point $X^\prime$, localizer $D(\X,\Y)$

\STATE{Randomly split the dataset $\mathcal{D}$ into a training fold $\cD_{\text{tra}} \triangleq \{(\X_i, \Y_i)\}_{i \in \mathcal{I}_{\text{tra}}}$ and a calibration fold $\cD_{\text{cal}} \triangleq \{(\X_i, \Y_i)\}_{i \in \mathcal{I}_{\text{cal}}}$ ;}

\STATE{Train a base neural network with training fold $f(\cdot) = g\circ h(\cdot)$ with training fold $\cD_{\text{tra}}$;}

\STATE{For each $i \in \mathcal{I}_{\text{cal}}$, calculate the non-conformity score 
$\tilde{R}_i = |\Y_i -  f(\X_i)|/\| \grad {g}(\hat{v}_i)\|,$ where $\grad {g}(\hat{v}_i)$ denotes the gradient of ${g}(\cdot)$ on the feature $\hat{v}_i \triangleq h(\X_i)$, namely $\grad {g}(\hat{v}_i) = \frac{\mathrm{d} {g}\circ {h}(\X_i)}{\mathrm{d} {h}(\X_i)} $;
}

\STATE{Calculate the distance $D_i \triangleq D(\X^\prime, \X_i)$, $d_{i}^D := \frac{D_{i}}{\sum_{i \in \mathcal{I}_{\text{cal}}} D_{i}}$ 
 and $(1-\alpha)$-th quantile $Q_{1-\alpha}$ of the distribution  $ \sum_{i \in \mathcal{I}_{\text{cal}}} d_i^D \delta_{\tilde{R}_i} + {\delta}_{\infty}$;}

\ENSURE $\cC_{1-\alpha}^{\text{fflcp}}(\X^\prime)= \left[ f(\X^\prime) - \|\nabla{g}(\hat{v}^\prime)\| Q_{1-\alpha}, f(\X^\prime) + \|\nabla {g}(\hat{v}^\prime)\| Q_{1-\alpha}\right]$, where $\hat{v}^\prime = h(\X^\prime)$.

\end{algorithmic} 
\end{algorithm*}

\subsection{FFRAPS}
\label{appendix: FFRAPS}
In this section, we show how to deploy gradient-level techniques in FFCP in classification problems. The basic ideas follow Algorithm~\ref{alg:ffraps}.

Comparing to the experimental part of RAPS, our core adjustments are as follows:

(a) During the calibration process, for the model's output of sorted scores $s$, we divide each element by the magnitude of its corresponding gradient: $s + \delta \cdot \|\nabla g(v)\|$.
Here, $\delta$ is an adjustable hyper-parameter that can be tuned to optimize the performance of the model based on the specific characteristics of the data and the problem at hand.

(b) In the stage of calculating the returned set, we multiply the generalized inverse quantile $\tau$ by the magnitude of the gradient of the corresponding test data: $s^\prime + \delta \cdot \|\nabla g(v^\prime)\|$

We summarize the experiment results in Table~\ref{tab1:classification}, where we adhere to the statistical methodologies of RAPS as described in~\cite{angelopoulos2020uncertainty}.
\begin{algorithm*}[t]
\caption{Fast Feature Regularized Adaptive Prediction Sets (FFRAPS)}
\label{alg:ffraps}
\begin{algorithmic}[1] %

\REQUIRE 
Confidence level $\alpha$, dataset $\cD = \{ (\Xs, \Ys)\}_{i \in \cI}$, tesing point $X^\prime$, and ground-truth label $y \in \{0,1,...,K\}^n$ for $\X \in \cD$ and $X^\prime$; regularization hyperparameters $k_{reg}$, $\delta$ and $\lambda$;
\STATE{Randomly split the dataset $\mathcal{D}$ into a training fold $\cD_{\text{tra}} \triangleq \{(\X_i, \Y_i)\}_{i \in \mathcal{I}_{\text{tra}}}$ and a calibration fold $\cD_{\text{cal}} \triangleq \{(\X_i, \Y_i)\}_{i \in \mathcal{I}_{\text{cal}}}$ ;}

\STATE{Train a base neural network with training fold $f(\cdot) = g\circ h(\cdot)$ with training fold $\cD_{\text{tra}}$;}

\STATE{For each $i \in \mathcal{I}_{\text{cal}}$, $L_i \gets j \text{ such that } I_{i,j} = y_i$, where $I$ represents the associated permutation of index. Calculate generalized inverse quantile conformity score 
$E_i = \Sigma_{j=1}^{L_i}s_{i,j}+\| \grad {g}(\hat{v}_i)\| \cdot \delta +\lambda(L_i-k_{reg})^+$
, where $\grad {g}(\hat{v}_i)$ denotes the gradient of ${g}(\cdot)$ on the feature $\hat{v}_i \triangleq h(\X_i)$, namely $\grad {g}(\hat{v}_i) = \frac{\mathrm{d} {g}\circ {h}(\X_i)}{\mathrm{d} {h}(\X_i)} $, where $s\triangleq\text{sort}f(\X)$ represents the sorted scores. Calculate $\hat{\tau}_{ccal} \gets  \lceil (1-\alpha)(1+n) \rceil$ largest value in $\{E_i\}_{i=1}^n$}

\STATE  Calculate $L \gets |\;\{ j \in \mathcal{Y} \; : \;  \Sigma_{i=1}^js^\prime_i + \| \grad {g}(\hat{v}^\prime_i)\| \cdot \delta + \lambda(j-k_{reg})^+ \leq \hat{\tau}_{ccal} * \}\;|+1$, where $\hat{v}^\prime = h(\X^\prime)$ and $s^\prime = \text{sort}f(\X^\prime)$;

\ENSURE $\cC_{1-\alpha}^{\text{FFRAPS}}(\X^\prime) = \big\{I_1, ... I_L\big\}$
\end{algorithmic}
\end{algorithm*}

\subsection{Additional Experiment Results}
\label{appendix:Additional-Experiment-Results}

This section provides more experiment results. Additional visual results for the segmentation problem are also presented in Figure~\ref{fig:add-seg}.

\begin{figure}[t]
    \centering
    \subfigure{\includegraphics[height=0.7\linewidth]{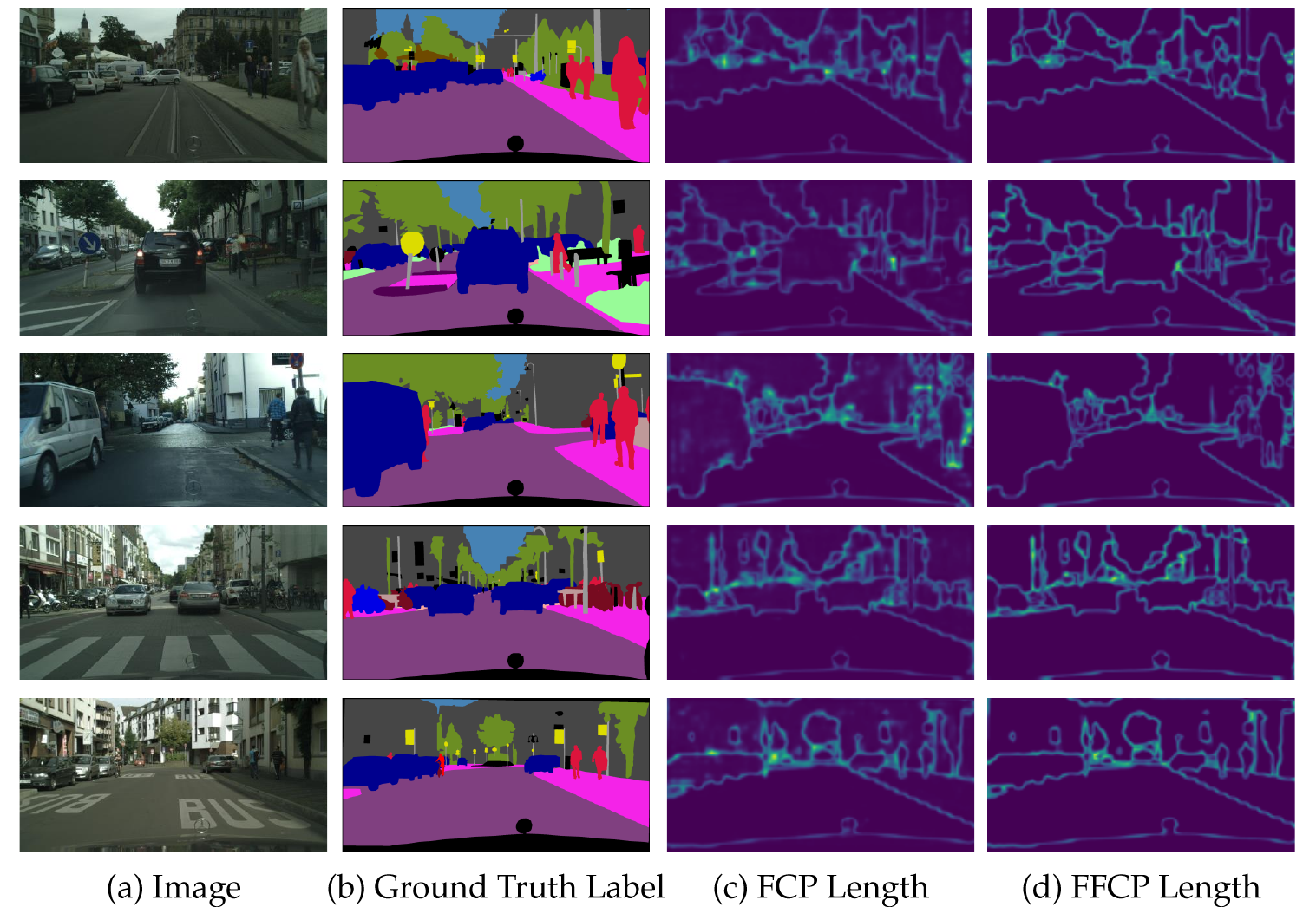}}
    \caption{Additional visualization results in segmentation task.}
    \label{fig:add-seg}
\end{figure}

\end{document}